\def\eqref#1{equation~\ref{#1}}
\def\1{\bm{1}}
\DeclareMathAlphabet{\mathsfit}{\encodingdefault}{\sfdefault}{m}{sl}
\SetMathAlphabet{\mathsfit}{bold}{\encodingdefault}{\sfdefault}{bx}{n}
\newtheorem{theorem}{Theorem}
\newtheorem{lemma}{Lemma}
\definecolor{mycolor_blue}{HTML}{E7EFFA}
\definecolor{mycolor_green}{HTML}{E6F8E0}
\definecolor{mycolor_gray}{HTML}{ECECEC}
\definecolor{pearDark}{HTML}{2980B9}
\definecolor{mygray}{gray}{.9}
\title{IterComp: Iterative Composition-Aware \\Feedback Learning from Model Gallery for Text-to-Image Generation}
\author{\textbf{Xinchen Zhang}$^{1*}$\quad \textbf{Ling Yang}$^{2}$\thanks{Contributed equally. Contact: yangling0818@163.com}\quad \textbf{Guohao Li}$^{5}$\quad \textbf{Yaqi Cai}$^{4}$\quad \textbf{Jiake Xie}$^{3}$\quad 
\textbf{Yong Tang}$^{3}$\\ \textbf{Yujiu Yang}$^{1\dag}$\quad \textbf{Mengdi Wang}$^{6}$ \quad \textbf{Bin Cui}$^{2}$\thanks{Corresponding authors.} \\
$^{1}$Tsinghua University\quad$^{2}$Peking University\quad $^{3}$LibAI Lab\quad $^{4}$USTC\\
$^{5}$University of Oxford\quad$^{6}$Princeton University\\
\url{https://github.com/YangLing0818/IterComp}
}
\begin{document}

\maketitle

\begin{abstract}
Advanced diffusion models like Stable Diffusion 3, Omost, and FLUX have made notable strides in compositional text-to-image generation. However, these methods typically exhibit distinct strengths for compositional generation, with some excelling in handling attribute binding and others in spatial relationships. This disparity highlights the need for an approach that can leverage the complementary strengths of various models to comprehensively improve the composition capability.
To this end, we introduce IterComp, a novel framework that aggregates composition-aware model preferences from multiple models and employs an iterative feedback learning approach to enhance compositional generation. Specifically, we curate a gallery of six powerful open-source diffusion models and evaluate their three key compositional metrics: attribute binding, spatial relationships, and non-spatial relationships. Based on these metrics, we develop a composition-aware model preference dataset comprising numerous image-rank pairs to train composition-aware reward models.
Then, we propose an iterative feedback learning method to enhance compositionality in a closed-loop manner, enabling the progressive self-refinement of both the base diffusion model and reward models over multiple iterations. Detailed theoretical proof demonstrates the effectiveness of this method. Extensive experiments demonstrate our significant superiority over previous methods, particularly in multi-category object composition and complex semantic alignment. IterComp opens new research avenues in reward feedback learning for diffusion models and compositional generation. 
\end{abstract}

\section{Introduction}

The rapid advancement of diffusion models \citep{sohl2015deep, ho2020denoising, song2020score, peebles2023scalable} has recently brought unprecedented progress to the field of text-to-image generation, with powerful models like DALL-E 3 \citep{betker2023improving}, Stable Diffusion 3, \citep{esser2024scaling} and FLUX \citep{flux} demonstrating remarkable capabilities in generating aesthetic and diverse images. However, these models often struggle to follow complex prompts to achieve precise compositional generation \citep{omost, yang2024mastering, zhang2024realcompo}, which requires the model to possess robust, comprehensive capabilities in various aspects, such as attribute binding, spatial relationships, and non-spatial relationships \citep{huang2023t2i}.

To enhance compositional generation, some works introduce additional conditions such as layouts/boxes \citep{li2023gligen, zhou2024migc, wang2024instancediffusion, zhang2024realcompo}. InstanceDiffusion \citep{wang2024instancediffusion} controls the generation process using layouts, masks, or other conditions through trainable instance masked attention layers. Although these layout-based methods demonstrate strong spatial awareness, they struggle with image realism, especially in generating non-spatial relationships and preserving aesthetic quality \citep{zhang2024realcompo}. Another potential solution leverages the impressive reasoning abilities of Large Language Models (LLMs) to decompose complex generation tasks into simpler subtasks \citep{yang2024mastering, omost, wang2024genartist}. RPG \citep{yang2024mastering} employs MLLMs as the global planner to transform the process of generating complex images into multiple simpler generation tasks within subregions. However, it requires designing complex prompts for LLMs, and it is challenging to achieve precise generation results due to their intricate outputs \citep{yang2024mastering}.

We conducted extensive experiments to explore the unique strengths of different models in compositional generation. As shown in the left example in \cref{fig:motivation}, text-to-image model FLUX \citep{flux} demonstrates impressive performance in attribute binding and aesthetic quality due to its advanced training techniques and model architecture. In contrast, layout-to-image model InstanceDiffusion \citep{wang2024instancediffusion} struggles to capture fine-grained visual details, such as 'night scene' or 'golden light.' In the right example of \cref{fig:motivation}, where the text prompt involves complex spatial relationships between multiple objects, FLUX \citep{flux} exhibits limitations in spatial awareness. In contrast, InstanceDiffusion \citep{wang2024instancediffusion} excels in handling spatial relationships through layout guidance. This demonstrates that different models exhibit distinct strengths across various aspects of compositional generation. Moreover, \cref{fig:dataset} further demonstrated these distinct strengths quantitatively. Naturally, a pertinent question arises: 
\textit{Is there a method capable of excelling in all aspects of compositional generation?}

In order to enable the diffusion model to improve compositional generation comprehensively, we present a new framework, \textbf{\textit{IterComp}}, which collects \textit{composition-aware model preferences} from various models, and then employs a novel yet simple \textit{iterative feedback learning} framework to achieve comprehensive improvements in compositional generation. Firstly, we select six open-sourced models excelling in different aspects of compositionality to form our model gallery. We focus on three essential compositional metrics: attribute binding, spatial relationships, and non-spatial relationships to curate a new composition-aware model preference dataset, which consists of a large number of image-rank pairs. Next, to comprehensively capture diverse composition-aware model preferences, we train reward models to provide fine-grained compositional guidance during the finetuning of the base diffusion model. Finally, given that compositional generation is difficult to optimize, we propose iterative feedback learning. This approach enhances compositionality in a closed-loop manner, allowing for the progressive self-refinement of both the base diffusion model and reward models in multiple iterations. We theoretically and experimentally demonstrate the effectiveness of our method and its significant improvement in compositional generation.

\begin{figure}[t!]
\vspace{-6mm}
\begin{center}
\centerline{\includegraphics[width=1\textwidth]{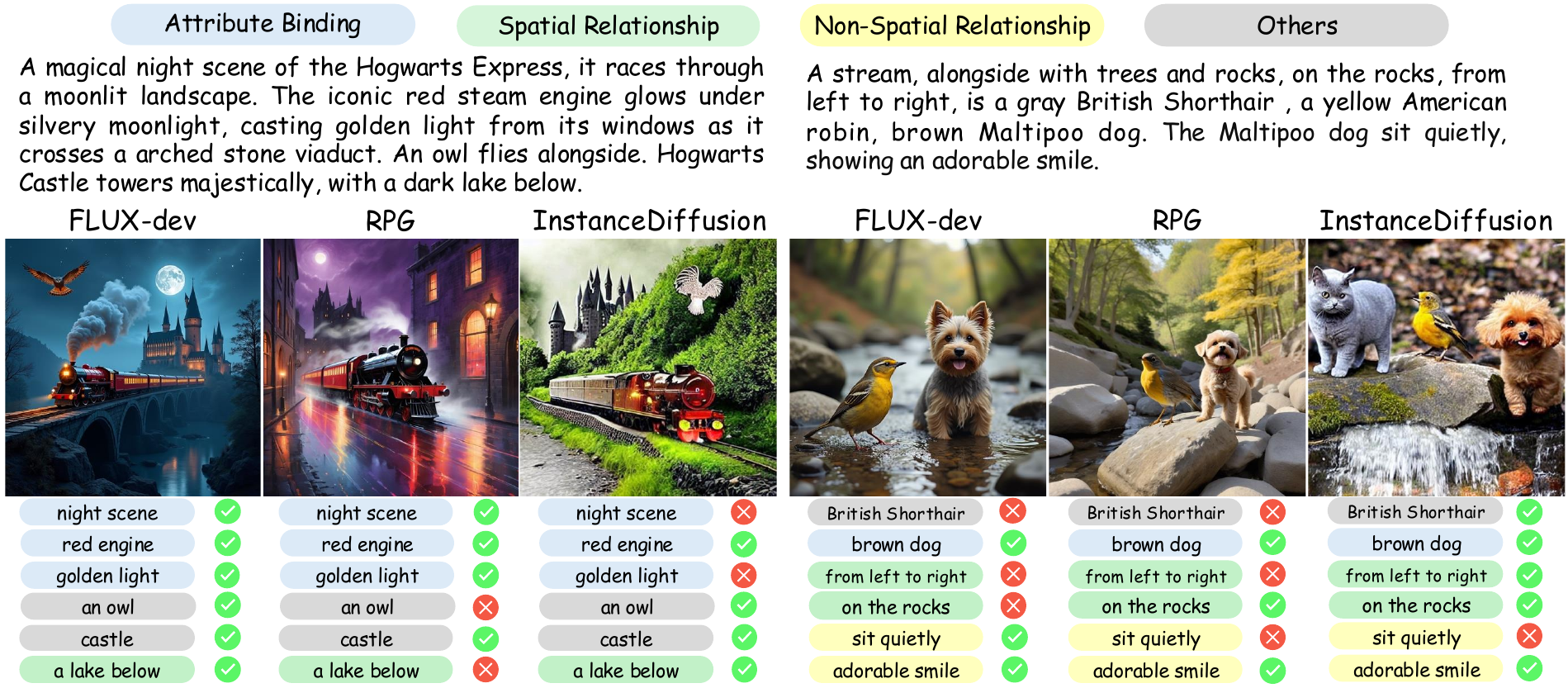}}
\vskip -0.05in
\caption{\textbf{Motivation of IterComp.} We select three types of compositional generation methods. The results show that different models exhibit distinct
strengths across various aspects of compositional generation. \cref{fig:dataset} further demonstrated these distinct strengths quantitatively. }
\label{fig:motivation}
\end{center}
\vspace{-8mm}
\end{figure}


Our contributions are summarized as follows:
\begin{itemize}
    \item We propose the first iterative composition-aware reward-controlled framework \textit{IterComp}, to comprehensively enhance the compositionality of the base diffusion model.
    \item We curate a model gallery and develop a high-quality composition-aware model preference dataset comprising numerous image-rank pairs.
    \item We utilize a new \textit{iterative feedback learning} framework to progressively enhance both the reward models and the base diffusion model.
    \item Extensive qualitative and quantitative comparisons with previous SOTA methods demonstrate the superior compositional generation capabilities of our approach.
\end{itemize}

\section{Related Work}

\paragraph{Compositional Text-to-Image Generation} Compositional text-to-image generation is a complex and challenging task that requires a model with comprehensive capabilities, including the understanding of complex prompts and spatial awareness \citep{yang2024mastering, zhang2024realcompo}. Some methods enhance prompt comprehension by using more powerful text encoders or architectures \citep{esser2024scaling, betker2023improving, hu2024ella, dai2023emu}. Stable Diffusion 3 \citep{esser2024scaling} utilizes three different-sized text encoders to enhance prompt comprehension. DALL-E 3 \citep{betker2023improving} enhances the understanding of rich textual details by expanding image captions through recaptioning. However, compositional capability such as spatial awareness remains a limitation of these models \citep{li2023gligen, chen2024training}. Other methods attempt to enhance spatial awareness by the control of additional conditions (e.g., layouts) \citep{yang2023reco, dahary2024yourself}. BoxDiff \citep{xie2023boxdiff} and LMD \citep{lian2023llmgrounded} guide the generated objects to strictly adhere to the layout by designing energy functions based on cross-attention maps. ControlNet \citep{zhang2023adding} and T2I-Adapter \citep{mou2024t2i} specify high-level image features to control semantic structures. Although these methods enhance spatial awareness, they often compromise image realism \citep{zhang2024realcompo}. Additionally, some approaches leverage the powerful reasoning capabilities of LLMs to assist in the generation process \citep{yang2024mastering, omost, wang2024genartist}. RPG \citep{yang2024mastering} employs MLLM to decompose complex compositional generation tasks into simpler subtasks. However, these methods require designing complex prompts as inputs to the LLM, and the diffusion model struggles to produce precise results due to the LLM's intricate outputs \citep{yang2024mastering}. In contrast, our method extracts these preferences from different models in model gallery and trains composition-aware reward models to refine the base diffusion model iteratively, achieving robust compositionality across multiple aspects.

\paragraph{Diffusion Model Alignment} Building on the success of reinforcement learning from human feedback (RLHF) in Large Language Models (LLMs) \citep{ouyang2022training, bai2022training}, numerous methods in diffusion models have attempted to use similar approaches for model alignment \citep{lee2023aligning, fan2024reinforcement, sun2023dreamsync}. Some methods use a pretrained reward model or train a new one to guide the generation process\citep{zhang2024unifl, black2023training, deng2024prdp, clark2023directly, prabhudesai2023aligning}. For instance, ImageReward \citep{xu2024imagereward} manually annotated a large dataset of human-preferred images and trained a reward model to assess the alignment between images and human preferences. Reward Feedback Learning (ReFL) is proposed for tuning diffusion models with the ImageReward model. RAHF \citep{liang2024rich} is trained on RichHF-18K, a high-quality dataset rich in human feedback, and is capable of predicting the unreasonable parts in generated images. Some methods bypass the training of a reward model and directly finetune diffusion models on human preference datasets \citep{yang2024using, liang2024step, yang2024dense}. Diffusion-DPO \citep{wallace2024diffusion} reformulates Direct Preference Optimization (DPO) to account for a diffusion model's notion of likelihood, utilizing the evidence lower bound to derive a differentiable objective. The potential for alignment in diffusion models goes beyond this. We iteratively align the base model with composition-aware model preferences from the model gallery, effectively enhancing its performance on compositional generation.

\section{Method}
In this section, we present our method, IterComp, which collects composition-aware model preferences from the model gallery and utilizes iterative feedback learning to enhance the comprehensive capability of the base diffusion model in compositional generation. An overview of IterComp is illustrated in \cref{fig:pipeline}. In \cref{collect dataset}, we introduce the method for collecting the composition-aware model preference dataset from the model gallery. In \cref{reward training}, we describe the training process for the composition-aware reward models and multi-reward feedback learning. In \cref{iterative training}, we propose the iterative feedback learning framework to enable the self-refinement of both the base diffusion model and reward models, progressively enhancing compositional generation.
\begin{figure}[t!]
\begin{center}
\centerline{\includegraphics[width=1\textwidth]{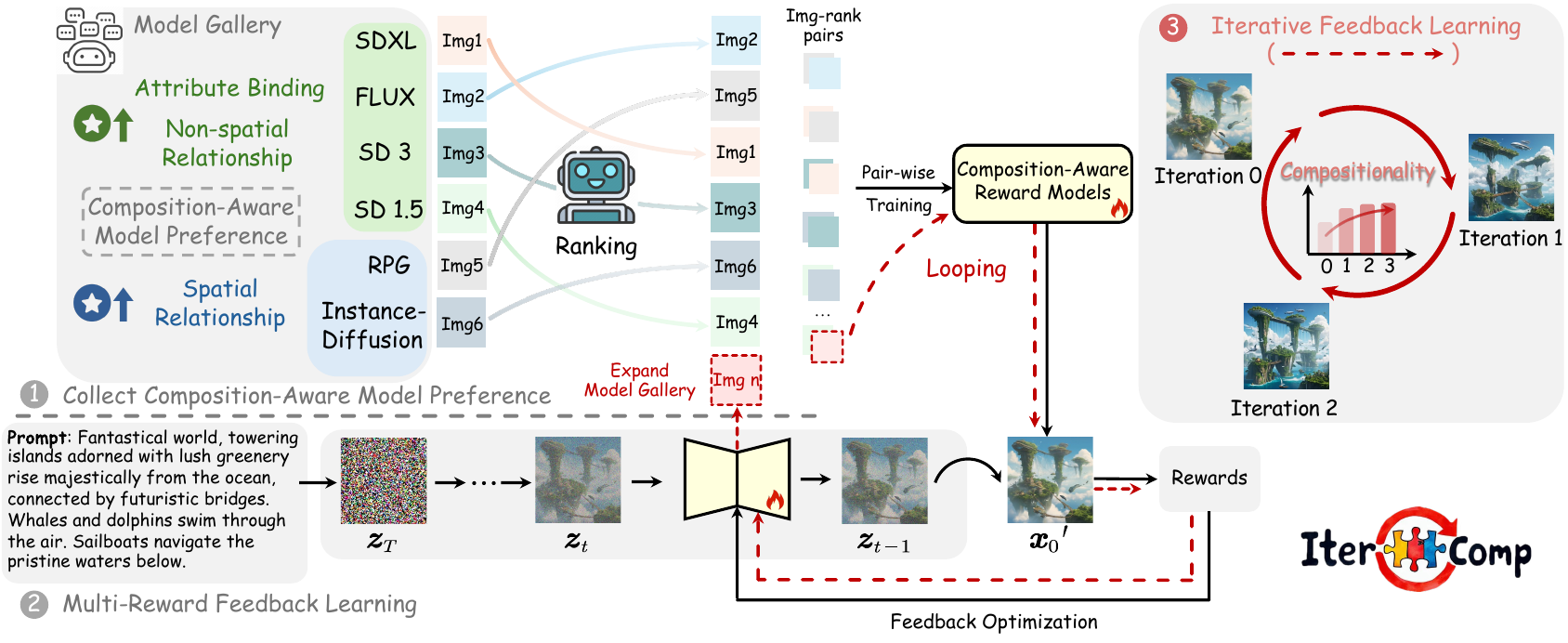}}
\vskip -0.05in
\caption{Overview of IterComp. We collect composition-aware model preferences from multiple models and employ an iterative feedback learning approach to enable the progressive self-refinement of both the base diffusion model and reward models.}
\label{fig:pipeline}
\end{center}
\vskip -0.3in
\end{figure}

\subsection{Collecting Human Preferences of Compositionality}
\label{collect dataset}

\paragraph{Compositional Metric and Model Gallery} We focus on three key aspects of compositionality: attribute binding, spatial relationships, and non-spatial relationships \citep{huang2023t2i}, to collect composition-aware model preferences. We initially select six open-sourced models excel in different aspects of compositional generation as our model gallery: FLUX-dev \citep{flux}, Stable Diffusion 3 \citep{esser2024scaling}, SDXL \citep{podell2023sdxl}, Stable Diffusion 1.5 \citep{rombach2022high}, RPG \citep{yang2024mastering}, and InstanceDiffusion \citep{wang2024instancediffusion}.

\paragraph{Human Ranking on Attribute Binding} For attribute binding, we randomly select 500 prompts from each of the following categories: color, shape, and texture in the T2I-CompBench \citep{huang2023t2i}, \textcolor{black}{resulting in a total of 1,500 prompts}. Three professional experts ranked the images generated by the six models for each prompt, and their rankings were weighted to determine the final result. The primary criterion is whether the attributes mentioned in the prompt were accurately reflected in the generated images, especially the correct representation and binding of attributes to the corresponding objects.

\paragraph{Human Ranking on Complex Relationships} For spatial and non-spatial relationships, we select 1,000 prompts for each category from the T2I-CompBench \citep{huang2023t2i} and apply the same manual annotation method to obtain the rankings. For spatial relationships, the primary ranking criterion is whether the objects are correctly generated and whether their spatial positioning matches the prompt. For non-spatial relationships, the focus is on whether the objects display natural and realistic actions.

\paragraph{Analysis of Composition-aware Model Preference Dataset} For each prompt, we obtain 6 images and $\binom{6}{2}=15$ image-rank pairs. As shown in \cref{table1}, in total, we collected a dataset with 22,500 image-rank pairs for model preference in attribute binding, 15,000 for spatial relationships, and 15,000 for non-spatial relationships. We visualize the proportion of generated images ranked first for each model in \cref{fig:dataset}. The results demonstrate that different models exhibit distinct strengths across various aspects of compositional generation, and this dataset effectively captures a diverse range of composition-aware model preferences.


\subsection{Composition-aware Multi-Reward Feedback Learning}
\label{reward training}
\paragraph{Composition-aware Reward Model Training} 
To achieve comprehensive improvements in compositional generation, we utilize three types of composition-aware datasets described in \cref{collect dataset}, decomposing compositionality into three subtasks and training a specific reward model for each. Specifically, the reward model $\mathcal{R}_{\theta_i}(\boldsymbol{c}, \boldsymbol{x}_0)$ is trained using the input format $\boldsymbol{x}_0^w \succ \boldsymbol{x}_0^l \mid \boldsymbol{c}$, where $\boldsymbol{x}_0^w$ and $\boldsymbol{x}_0^l$ denoting the "winning" and "losing" images, $\boldsymbol{c}$ denoting the text prompt. We select two images corresponding to the same prompt from the composition-aware model preference datasets to form an input image-rank pair, and trained the reward model using the following loss function:
\begin{equation}
    \mathcal{L}(\theta_i)=-\mathbb{E}_{\left(\boldsymbol{c}, \boldsymbol{x}_0^w,  \boldsymbol{x}_0^l\right) \sim \mathcal{D}_i}\left[\log \left(\sigma\left(\mathcal{R}_{\theta_i}\left(\boldsymbol{c}, \boldsymbol{x}_0^w\right)-\mathcal{R}_{\theta_i}\left(\boldsymbol{c}, \boldsymbol{x}_0^l\right)\right)\right)\right]
\end{equation}
where $\mathcal{D}$ denotes the composition-aware model preference dataset, $\sigma(\cdot)$ is the sigmoid function. 

The three composition-aware reward models apply BLIP \citep{li2022blip,xu2024imagereward} as feature extractors. We combine the extracted image and text features with cross attention mechanism, and use a learnable MLP to generate a score scalar for preference comparison.

\begin{figure}[t]
\vspace{-5mm}
    \centering
    \begin{minipage}{0.45\textwidth}  
        \small
        \captionof{table}{Statistics on the composition-aware model preference dataset. The dataset consists of 3,500 text prompts, 27,500 images, and 52,500 image-rank pairs.}
        \vspace{2mm}
        \label{table1}
        \begin{adjustbox}{width=\linewidth}
        \begin{tabular}{lccc}
    \toprule
    & \multicolumn{3}{c}{Counts} \\
    \cmidrule(r){2-4}
    Category & Texts & Images & Image-rank pairs \\
    \midrule
    Attribute Binding &  1,500 & 9,000 & 22,500 \\

    Spatial Relationship & 1,000 & 6,000 & 15,000 \\
    Non-spatial Relationship & 1,000 & 6,000 & 15,000 \\
    \midrule
    Total & 3,500 & 21,000 & 52,500 \\
    \bottomrule
\end{tabular}
        \end{adjustbox}
    \end{minipage}%
    \hfill  
    \begin{minipage}{0.52\textwidth}  
        \centering
        \includegraphics[width=.95\textwidth]{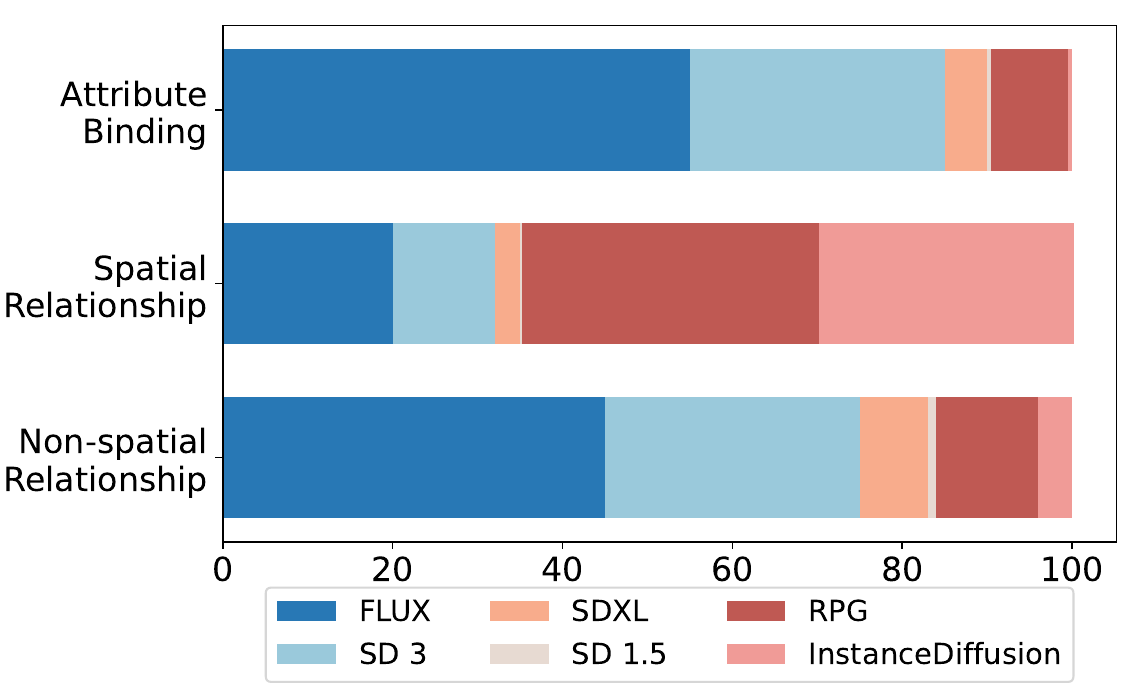}  
        \vspace{-3mm}
        \caption{The proportion of each model ranked first.}
        \label{fig:dataset}
    \end{minipage}
    \vspace{-3mm}
\end{figure}

\label{iterative training}
\paragraph{Multi-Reward Feedback Learning} Due to the multi-step denoising process in diffusion models, yielding likelihoods for their generations is impossible, making the RLHF approach used in language models unsuitable for diffusion models. Some existing methods \citep{xu2024imagereward, zhang2024unifl} finetune diffusion models directly by treating the scores of the reward model as the human preference loss. To optimize the base diffusion model using multiple composition-aware reward models, we design the loss function as follows:
\begin{equation}
    \mathcal{L}(\theta)=\lambda \mathbb{E}_{\boldsymbol{c}_j \sim \mathcal{C}}\sum_i\left(\phi\left(\mathcal{R}_{i}\left(\boldsymbol{c}_j, p_\theta\left(\boldsymbol{c}_j\right)\right)\right)\right)
\end{equation}
where $\mathcal{C}=\{\boldsymbol{c}_1,\boldsymbol{c}_2,\dots,\boldsymbol{c}_n\}$ denotes the prompt set, $p_\theta(\boldsymbol{c})$ denotes the generate image of diffusion model with parameter $\theta$ under the condition of prompt $\boldsymbol{c}$. We calculate the loss for each reward model $\mathcal{R}_{i}(\cdot)$ and sum them to obtain the multi-reward feedback loss.

\subsection{Iterative Optimization of Composition-aware Feedback Learning}

Compositional generation is challenging to optimize due to its inherent complexity and multifaceted nature, requiring both our reward models and base diffusion model to excel in aspects such as complex text comprehension and the generation of complex relationships. To ensure more thorough optimization, we propose an iterative feedback learning framework that progressively refines both the reward models and the base diffusion model over multiple iterations.


\begin{algorithm}[t]
   \caption{Iterative Composition-aware Feedback Learning}
   \label{algorithm1}
    \begin{algorithmic}[1]
    \Statex {\bfseries Dataset:}  Composition-aware model preference dataset $\mathcal{D}_0\!=\!\{((\boldsymbol{c}_1, \boldsymbol{x}_0^w, \boldsymbol{x}_0^l),\dots, (\boldsymbol{c}_n, \boldsymbol{x}_0^w, \boldsymbol{x}_0^l)\}$ Prompt set $\mathcal{C}=\{\boldsymbol{c}_1,\boldsymbol{c}_2,\dots,\boldsymbol{c}_n\}$
   \Statex {\bfseries Input:}  Base model with pretrained parameters $p_\theta$, reward model $\mathcal{R}$, reward-to-loss map function $\phi$, reward re-weight scale $\lambda$,  iterative optimization iterations $iter$
   \Statex {\bfseries Initialization: }  Number of noise scheduler time steps $T$, time step range for finetuning $[T_1, \!T_2]$
    \For {$k=0,\ldots ,iter$}
        \For{$(\boldsymbol{c}_i, \boldsymbol{x}_0^w, \boldsymbol{x}_0^l)\in\mathcal{D}_k$}
            \State $\mathcal{L} \leftarrow \log \left(\sigma\left(\mathcal{R}_{\theta_i}^k\left(\boldsymbol{c}, \boldsymbol{x}_0^w\right)-\mathcal{R}_{\theta_i}^k\left(\boldsymbol{c}, \boldsymbol{x}_0^l\right)\right)\right)$ \ \ \ // Reward model loss
            \State $\mathcal{R}^k_{\theta_{i+1}} \leftarrow \mathcal{R}_{\theta_i}^k(\boldsymbol{c}_i, \boldsymbol{x}_0^w, \boldsymbol{x}_0^l)$ \ \ \ // Update the reward models
        \EndFor \ \ \ // Get $\mathcal{R}^{k+1}$ after training
        \For {$\boldsymbol{c}_i\in \mathcal{C}$}
        \State $t \leftarrow rand(T_1, T_2)$ \ \ \ // Pick a random timestep $t\in[T_1, T_2]$
        \State $\boldsymbol{z}_T \sim \mathcal{N} (\mathbf{0},\mathbf{I}) $
            \For{$j=T,\dots, t+1$}
            \State \textbf{no grad:} $\boldsymbol{z}_{j-1}\leftarrow p_{\theta_i}^k(\boldsymbol{z}_j)$
            \EndFor
        \State \textbf{with grad:} $\boldsymbol{z}_{t-1}\leftarrow p_{\theta_i}^k(\boldsymbol{z}_t)$
        \State $\boldsymbol{x}_0 \leftarrow \text{VaeDec}(\boldsymbol{z}_0) \leftarrow \boldsymbol{z}_{t-1}$ \ \ \ // Predict image from the original latent 
        \State $\mathcal{L} \leftarrow \lambda\phi(\sum_\theta\mathcal{R}^{k+1}_\theta(\boldsymbol{c}_i, \boldsymbol{x}_0))$ \ \ \ // Multi-reward feedback learning loss
        \State $p_{\theta_{i+1}}^k \leftarrow p_{\theta_i}^k$ \ \ \ // Update the base diffusion model
        \EndFor \ \ \ // Get $p^{k+1}$ after training
        \For{$(\boldsymbol{c}_i, \boldsymbol{x}_0^w, \boldsymbol{x}_0^l)\in\mathcal{D}_k$}
            \State $\boldsymbol{x}_0^*\leftarrow p^{k+1}(\boldsymbol{c}_i)$ \ \ \ // Sample images from the optimized base diffusion model
        \EndFor
        \State $\mathcal{D}_{k+1} \leftarrow rank(\mathcal{D}_k\cup \boldsymbol{x}_0^*)$ \ \ \ // Expand the dataset and update ranking
    \EndFor
        \end{algorithmic}
\end{algorithm}

At the $(k+1)$-th iteration of the optimization described in \cref{reward training}, we denote the reward models and the base diffusion model from the previous iteration as $\mathcal{R}^k(\cdot)$ and $p_\theta^k(\cdot)$, respectively. For each prompt $\boldsymbol{c}$ in the datasets $\mathcal{D}^k$, we sample an image $\boldsymbol{x}_0^*=p_\theta^k(\boldsymbol{c})$ and expand the composition-aware model preference dataset $\mathcal{D}^k$ with the sampled image. The image rankings for each prompt are updated using the trained reward model $\mathcal{R}_{\theta}^k(\cdot)$, while preserving the relative ranks of the initial six images. Following this process, we update the composition-aware model preference dataset to a more comprehensive version, denoted as $\mathcal{D}^{k+1}$. Using this dataset, we finetune both the reward models and the base diffusion model to get $\mathcal{R}^{k+1}(\cdot)$ and $p_\theta^{k+1}(\cdot)$. The detailed process of iterative feedback learning can be found in \cref{algorithm1}.

\paragraph{Effectiveness of Iterative Feedback Learning} Through this iterative feedback learning framework, the reward models become more effective at understanding complex compositional prompts, providing more comprehensive guidance to the base diffusion model for compositional generation. The optimization objective of the iterative feedback learning process is formalized in the following lemma (proof provided in the \cref{proof}):
\begin{lemma} \label{lemma1}
    The unified optimization framework of iterative feedback learning can be formulated as:
    \begin{equation}
    \max _\theta \  J(\theta)\!=\!\mathbb{E}_{\left[\boldsymbol{c} \sim \mathcal{C}, (\boldsymbol{x}_0^w, \boldsymbol{x}_0^l) \sim p_\theta^*(\cdot \mid \boldsymbol{c})\right]}\left[\log \sigma\left( \!\beta\log \frac{p_\theta^*\left(\boldsymbol{x}_{0: T}^w \mid \boldsymbol{c}\right)}{p_{\mathrm{ref}}\left(\boldsymbol{x}_{0: T}^w \mid \boldsymbol{c}\right)} - \beta\log \frac{p_\theta^*\left(\boldsymbol{x}_{0: T}^l \mid \boldsymbol{c}\right)}{p_{\mathrm{ref}}\left(\boldsymbol{x}_{0: T}^l \mid \boldsymbol{c}\right)}\!\right)\right]
\end{equation}
\end{lemma}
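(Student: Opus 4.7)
The plan is to derive the stated objective via a DPO-style reparameterization adapted to the iterative, trajectory-level setting of \cref{algorithm1}. The loop alternates two sub-procedures that must be unified: (i) Bradley--Terry training of the composition-aware reward models on preference pairs in $\mathcal{D}^k$; and (ii) KL-regularized reward maximization of the base diffusion model $p_\theta$ against the previous-iteration reference $p_{\mathrm{ref}}$. The goal is to eliminate the explicit reward from their combination, yielding the policy-ratio form shown in the lemma.

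First I would write step (ii) explicitly as
\begin{equation*}
\max_\theta\ \mathbb{E}_{\boldsymbol{c}\sim\mathcal{C},\,\boldsymbol{x}_{0:T}\sim p_\theta(\cdot\mid\boldsymbol{c})}\!\bigl[\mathcal{R}(\boldsymbol{c},\boldsymbol{x}_0)\bigr] - \beta\,\mathrm{KL}\!\bigl(p_\theta(\boldsymbol{x}_{0:T}\mid\boldsymbol{c})\,\|\,p_{\mathrm{ref}}(\boldsymbol{x}_{0:T}\mid\boldsymbol{c})\bigr),
\end{equation*}
and invoke the standard variational argument to obtain the closed-form optimizer $p_\theta^*(\boldsymbol{x}_{0:T}\mid\boldsymbol{c}) = Z(\boldsymbol{c})^{-1}\, p_{\mathrm{ref}}(\boldsymbol{x}_{0:T}\mid\boldsymbol{c})\exp(\mathcal{R}(\boldsymbol{c},\boldsymbol{x}_0)/\beta)$. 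Inverting this identity expresses the reward purely through the policy ratio, $\mathcal{R}(\boldsymbol{c},\boldsymbol{x}_0) = \beta\log\frac{p_\theta^*(\boldsymbol{x}_{0:T}\mid\boldsymbol{c})}{p_{\mathrm{ref}}(\boldsymbol{x}_{0:T}\mid\boldsymbol{c})} + \beta\log Z(\boldsymbol{c})$, up to a prompt-dependent normalizer.

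Next I would substitute this expression for $\mathcal{R}$ into the Bradley--Terry log-likelihood of step (i). The $\log Z(\boldsymbol{c})$ term cancels in the difference $\mathcal{R}(\boldsymbol{c},\boldsymbol{x}_0^w)-\mathcal{R}(\boldsymbol{c},\boldsymbol{x}_0^l)$, producing exactly the $\sigma(\cdot)$ argument in the lemma. To capture the \emph{iterative} character of the procedure I would observe that by Lines 15--17 of \cref{algorithm1} the preference pairs in $\mathcal{D}^{k+1}$ include fresh samples drawn from the current optimized diffusion model and are re-ranked by the current reward model; at the fixed point of the loop these pairs are distributed according to $p_\theta^*(\cdot\mid\boldsymbol{c})$. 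Taking the expectation over $\boldsymbol{c}\sim\mathcal{C}$ and these self-generated pairs then yields $J(\theta)$.

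The main obstacle is the trajectory-level nature of diffusion likelihoods: the classical DPO reparameterization is stated for a single-step policy $p_\theta(\boldsymbol{x}_0\mid\boldsymbol{c})$, whereas the lemma uses the joint $p_\theta^*(\boldsymbol{x}_{0:T}\mid\boldsymbol{c})$ because the marginal is intractable. I would handle this by stating the KL regularization over the full reverse-process trajectory and applying Jensen's inequality in the spirit of Diffusion-DPO to move from the intractable $\log$-marginal to a tractable trajectory-ratio lower bound; this preserves the ratio form and only changes the interpretation of the implicit reward. A secondary subtlety is the coupling between reward and policy updates across iterations, which I would resolve by identifying the fixed point of the alternating updates: at convergence, the Bradley--Terry gradient with respect to the implicit reward and the KL-regularized policy gradient share the same stationary condition, which is precisely the one encoded by $J(\theta)$.
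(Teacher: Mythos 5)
Your proposal is correct and follows essentially the same route as the paper: cast the loop as a bilevel problem (Bradley--Terry reward fitting constrained by the KL-regularized optimal policy), invert the closed-form optimizer to reparameterize the reward as $\beta\log\frac{p_\theta^*}{p_{\mathrm{ref}}}+\beta\log Z(\boldsymbol{c})$, and substitute so that $\log Z(\boldsymbol{c})$ cancels in the pairwise difference. Your added remarks on the trajectory-level Jensen bound and the fixed-point interpretation of the sampling distribution are reasonable elaborations of steps the paper takes implicitly.
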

where $p^*(\cdot)$ denotes the optimized base diffusion model. We simplify the bilevel problem of iterative feedback learning into a single-level objective. Based on this, we present the following theorem regarding the gradient of this objective:

\begin{theorem} \label{theorem1}
    Assume that $F_\theta(\boldsymbol{c}, \boldsymbol{x}_0^w,  \boldsymbol{x}_0^l) = \log \sigma\left( \!\beta\log \frac{p_\theta^*\left(\boldsymbol{x}_{0: T}^w \mid \boldsymbol{c}\right)}{p_{\mathrm{ref}}\left(\boldsymbol{x}_{0: T}^w \mid \boldsymbol{c}\right)} - \beta\log \frac{p_\theta^*\left(\boldsymbol{x}_{0: T}^l \mid \boldsymbol{c}\right)}{p_{\mathrm{ref}}\left(\boldsymbol{x}_{0: T}^l \mid \boldsymbol{c}\right)}\!\right)$, the gradient of optimization object can be written as the sum of two terms: $\nabla_\theta J(\theta)=T_1+T_2$, where:
    \begin{equation}\label{T1}
        T_1 = \mathbb{E}\left[\left(\nabla_\theta \log p_\theta\left(\boldsymbol{x}_{0: T}^w \mid \boldsymbol{c}\right)+\nabla_\theta \log p_\theta\left(\boldsymbol{x}_{0: T}^l \mid \boldsymbol{c}\right)\right) F_\theta\left(\boldsymbol{c}, \boldsymbol{x}_0^w,  \boldsymbol{x}_0^l\right)\right]
    \end{equation}
    \begin{equation}\label{T2}
        T_2 = \mathbb{E}_{\left[\boldsymbol{c} \sim \mathcal{C}, (\boldsymbol{x}_0^w, \boldsymbol{x}_0^l) \sim p_\theta^*(\cdot \mid \boldsymbol{c})\right]}[\nabla_\theta[F_\theta(\boldsymbol{c}, \boldsymbol{x}_0^w,  \boldsymbol{x}_0^l)]]
    \end{equation}
\end{theorem}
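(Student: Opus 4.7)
The plan is to derive the decomposition by combining a single application of the log-derivative (REINFORCE) identity with the product rule, treating $p_\theta^*$ as the sampling policy and $F_\theta$ as a $\theta$-dependent reward. First I would rewrite the objective explicitly as an iterated integral
\begin{equation*}
J(\theta)=\int p(\boldsymbol{c})\int p_\theta^{*}\!\bigl(\boldsymbol{x}_{0:T}^w\mid\boldsymbol{c}\bigr)\,p_\theta^{*}\!\bigl(\boldsymbol{x}_{0:T}^l\mid\boldsymbol{c}\bigr)\,F_\theta\bigl(\boldsymbol{c},\boldsymbol{x}_0^w,\boldsymbol{x}_0^l\bigr)\,d\boldsymbol{x}_{0:T}^w\,d\boldsymbol{x}_{0:T}^l\,d\boldsymbol{c},
\end{equation*}
using the convention that the winning and losing trajectories are drawn from the same conditional policy, with the ranking encoded in $F_\theta$ through its functional dependence on the two arguments.

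Second, I would exchange $\nabla_\theta$ with the integrals and apply the product rule to the three $\theta$-dependent factors in the integrand. The term in which the gradient hits $F_\theta$ immediately yields $T_2$ after re-folding densities back into the expectation over $(\boldsymbol{c},\boldsymbol{x}_0^w,\boldsymbol{x}_0^l)$. For the remaining term, in which the gradient hits the product of the two policy densities, I would invoke the log-derivative identity $\nabla_\theta q = q\,\nabla_\theta\log q$ with $q = p_\theta^{*}(\boldsymbol{x}_{0:T}^w\mid\boldsymbol{c})\,p_\theta^{*}(\boldsymbol{x}_{0:T}^l\mid\boldsymbol{c})$. Because the logarithm of a product decomposes as a sum, this gives
\begin{equation*}
\nabla_\theta\bigl[p_\theta^{*}(\boldsymbol{x}_{0:T}^w\mid\boldsymbol{c})\,p_\theta^{*}(\boldsymbol{x}_{0:T}^l\mid\boldsymbol{c})\bigr] = p_\theta^{*}(\boldsymbol{x}_{0:T}^w\mid\boldsymbol{c})\,p_\theta^{*}(\boldsymbol{x}_{0:T}^l\mid\boldsymbol{c})\bigl[\nabla_\theta\log p_\theta(\boldsymbol{x}_{0:T}^w\mid\boldsymbol{c})+\nabla_\theta\log p_\theta(\boldsymbol{x}_{0:T}^l\mid\boldsymbol{c})\bigr],
\end{equation*}
and multiplying by $F_\theta$ and re-absorbing the density product into the expectation reproduces $T_1$ exactly. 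Summing the two contributions gives $\nabla_\theta J(\theta)=T_1+T_2$.

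The main obstacle is notational rather than analytic. One must (i) justify the interchange of $\nabla_\theta$ with the two trajectory integrals, which is standard under mild smoothness of the diffusion score network and integrability of $F_\theta$ (the latter is automatic because $F_\theta$ is a log-sigmoid and hence bounded above); (ii) keep careful track of which $\theta$ the gradient acts on, since $F_\theta$ contains both the trainable $p_\theta^{*}$ and the frozen reference $p_{\mathrm{ref}}$, with only the former contributing to $T_1$ and $T_2$; and (iii) confirm that the joint pair density factors as a product under the paper's sampling convention for $(\boldsymbol{x}_0^w,\boldsymbol{x}_0^l)\sim p_\theta^{*}(\cdot\mid\boldsymbol{c})$, so that the REINFORCE step cleanly produces the sum of two score functions rather than cross terms. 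Once these points are pinned down the derivation reduces to a one-line application of the product rule plus REINFORCE, with no additional inequalities or function-space arguments required.
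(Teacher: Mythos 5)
Your proposal is correct and takes essentially the same route as the paper: both write $J(\theta)$ as a sum/integral of the product $p_\theta(\boldsymbol{x}_{0:T}^w\mid\boldsymbol{c})\,p_\theta(\boldsymbol{x}_{0:T}^l\mid\boldsymbol{c})\,F_\theta$, apply the product rule so that the term where the gradient hits $F_\theta$ gives $T_2$, and use the log-derivative identity on the factored pair density to obtain the sum of the two score functions in $T_1$. The paper's version is just the sum-notation analogue of your integral derivation (and is silent on the interchange and regularity points you flag).
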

It is evident that $T_2$ represents the gradient form of direct preference optimization. In addition, we have another term $T_1$, which guides the gradient of optimization objective. As shown in \cref{T1}, the gradient directs the generation of $\boldsymbol{x}_0^w$ and $\boldsymbol{x}_0^w$ to optimize the implicit reward function $F_\theta(\boldsymbol{c}, \boldsymbol{x}_0^w,  \boldsymbol{x}_0^l)$. The gradient term $T_1$ helps the model better distinguish between winning and losing samples, increasing the probability of generating high-quality images while reducing the probability of generating low-quality images. This improves the model's alignment with the reward model's preferences during generation, thereby enhancing the comprehensive capabilities of compositional generation.


\paragraph{Superiority over Diffusion-DPO and ImageReward} Here we clarify some superiorities of IterComp over Diffusion-DPO \citep{wallace2024diffusion} and ImageReward \citep{xu2024imagereward}.
Our IterComp first focuses on composition-aware rewards to optimize T2I models for realistic complex generation scenarios, and constructs a powerful model gallery to collect multiple composition-aware model preferences. Then our novel iterative feedback learning framework can effectively achieve progressive self-refinement of both base diffusion model and reward models over multiple iterations.


\begin{figure}[H]
\vskip -0.2in
\begin{center}
\centerline{\includegraphics[width=.9\textwidth]{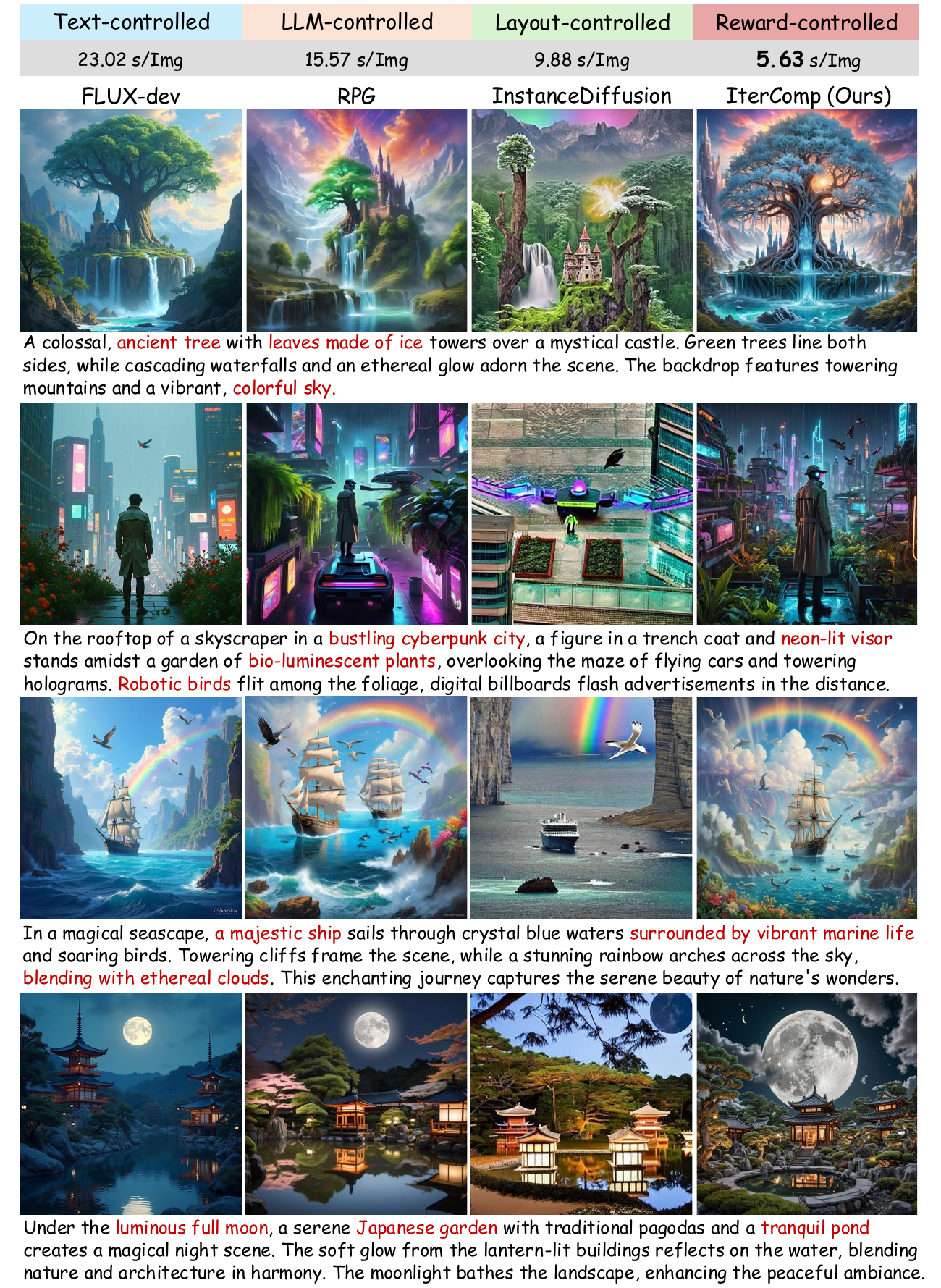}}
\vskip -0.0in
\caption{Qualitative comparison between our IterComp and three types of compositional generation methods: text-controlled, LLM-controlled, and layout-controlled approaches. IterComp is the first reward-controlled method for compositional generation, utilizing an iterative feedback learning framework to enhance the compositionality of generated images. Colored text denotes the advantages of IterComp in generated images.}
\label{fig:experiment}
\end{center}
\vskip -0.3in
\end{figure}

\section{Experiments}
\subsection{Experimental Setup}
\paragraph{Datasets and Training Setting} The reward models are trained on the composition-aware model preference dataset, consisting of 3,500 prompts and 52,500 image-rank pairs. For training the three reward models, we finetune BLIP and the learnable MLP with a learning rate of $1e-5$ and a batch size of 64. During the iterative feedback learning process, we randomly select 10,000 prompts from DiffusionDB \citep{wang2022diffusiondb} and use SDXL \citep{betker2023improving} as the base diffusion model, finetuning it with a learning rate of $1e-5$ and a batch size of 4. We set $T=40$, $[T_1, T_2]=[1,10]$, $\phi = \text{ReLU}$, and $\lambda=1e-3$. All experiments are conducted on 4 NVIDIA A100 GPUs.

\paragraph{Baseline Models} We curate a model gallery of six open-source models, each excelling in different aspects of compositional generation: FLUX \citep{flux}, Stable Diffusion 3 \citep{esser2024scaling}, SDXL \citep{betker2023improving}, Stable Diffusion 1.5 \citep{rombach2022high}, RPG \citep{yang2024mastering}, and InstanceDiffusion \citep{wang2024instancediffusion}. To ensure the base diffusion model thoroughly and comprehensively learns composition-aware model preferences, we progressively expand the model gallery by incorporating new models (e.g., Omost \citep{omost}, Stable Cascade \citep{pernias2023wurstchen}, PixArt-$\alpha$ \citep{chen2023pixart}) at each iteration. For performance comparison in compositional generation, we select several state-of-the-art methods, including FLUX \citep{flux}, SDXL \citep{betker2023improving}, and RPG \citep{yang2024mastering} to compare with our approach. We use GPT-4o \citep{gpt4o} for the LLM-controlled methods. \textcolor{black}{Additionally, GPT-4o is also employed to infer the layout from the prompt for the layout-controlled methods.}

\subsection{Main Results}

\begin{table*}[t!]
\vspace{-6mm}
\centering
\caption{Evaluation results about compositionality on T2I-CompBench \citep{huang2023t2i}. IterComp consistently demonstrates the best performance regarding attribute binding, object relationships, and complex compositions. We denote the best score in \colorbox{pearDark!20}{blue} and the second-best score in \colorbox{mycolor_green}{green}. The baseline data is quoted from GenTron \citep{chen2024gentron}. }
\vspace{-1mm}
\label{benchmark:t2icompbench}
\resizebox{1\linewidth}{!}{ 
\begin{tabular}
{lcccccc}
\toprule
\multicolumn{1}{c}
{\multirow{2}{*}{\bf Model}} & \multicolumn{3}{c}{\bf Attribute Binding } & \multicolumn{2}{c}{\bf Object Relationship} & \multirow{2}{*}{\bf Complex$\uparrow$}\\
\cmidrule(lr){2-4}\cmidrule(lr){5-6}
&
{\bf Color $\uparrow$ } &
{\bf Shape$\uparrow$} &
{\bf Texture$\uparrow$} &
{\bf Spatial$\uparrow$} &
{\bf Non-Spatial$\uparrow$} &
\\
\midrule
Stable Diffusion 1.4 \citep{rombach2022high}   & 0.3765 & 0.3576 & 0.4156 & 0.1246 & 0.3079 & 0.3080  \\
Stable Diffusion 2 \citep{rombach2022high}  & 0.5065 & 0.4221 & 0.4922 & 0.1342 & 0.3096 & 0.3386  \\
Attn-Exct v2 \citep{chefer2023attend} &  0.6400 & 0.4517 & 0.5963 & 0.1455 & 0.3109 &  0.3401  \\

Stable Diffusion XL \citep{betker2023improving} & 0.6369 & 0.5408 & 0.5637 & 0.2032 & 0.3110 & 0.4091  \\

PixArt-$\alpha$ \citep{chen2023pixart} & 0.6886 & 0.5582 & 0.7044 & 0.2082 & 0.3179  & 0.4117  \\

ECLIPSE \citep{patel2024eclipse} &0.6119 & 0.5429 & 0.6165 & 0.1903 & 0.3139 & -\\

Dimba-G \citep{fei2024dimba} &0.6921 & 0.5707 & 0.6821 & 0.2105 & \cellcolor{mycolor_green}{0.3298} & \cellcolor{mycolor_green}{0.4312}\\

GenTron \citep{chen2024gentron} & \cellcolor{mycolor_green}{0.7674} & \cellcolor{mycolor_green}{0.5700} & \cellcolor{mycolor_green}{0.7150} & {0.2098} & 0.3202  & 0.4167  \\
\midrule
GLIGEN \citep{li2023gligen} & 0.4288  & 0.3998 & 0.3904 & 0.2632 & 0.3036  & 0.3420 \\
LMD+ \citep{lian2023llm} & 0.4814  & 0.4865 & 0.5699 & 0.2537& 0.2828 & 0.3323 \\
InstanceDiffusion \citep{wang2024instancediffusion} & 0.5433 & 0.4472 & 0.5293 & \cellcolor{mycolor_green}{0.2791} & 0.2947 & 0.3602 \\
\midrule
\textbf{IterComp (Ours)} & \cellcolor{pearDark!20}{0.7982} & \cellcolor{pearDark!20}{0.6217} &  \cellcolor{pearDark!20}{0.7683} & \cellcolor{pearDark!20}{0.3196} & \cellcolor{pearDark!20}{0.3371}   & \cellcolor{pearDark!20}{0.4873} \\
\bottomrule
\end{tabular}
}
\vspace{-4mm}
\end{table*}

\paragraph{Qualitative Comparison} 
As shown in \cref{fig:experiment}, IterComp achieves superior compositional generation results compared to the three main types of compositional generation methods: text-controlled, LLM-controlled, and layout-controlled approaches. In comparison to text-controlled methods FLUX \citep{flux}, IterComp excels in handling spatial relationships, significantly reducing errors such as object omissions and inaccuracies in numeracy and positioning. When compared to LLM-controlled methods like RPG \citep{yang2024mastering}, IterComp produces more reasonable object placements, avoiding the unrealistic positioning caused by LLM hallucinations. Compared to layout-controlled methods like InstanceDiffusion \citep{wang2024instancediffusion}, IterComp demonstrates a clear advantage in both semantic aesthetics and compositionality, particularly when generating under complex prompts.

\paragraph{Quantitative Comparison} We compare IterComp with previous outstanding compositional text/layout-to-image models on the T2I-CompBench \citep{huang2023t2i} in six key compositional scenarios. As shown in \cref{benchmark:t2icompbench}, IterComp demonstrates a remarkable preference across all evaluation tasks. Layout-controlled methods such as LMD+ \citep{lian2023llm} and InstanceDiffusion \citep{wang2024instancediffusion} excel in generating accurate spatial relationships, while text-to-image models like SDXL \citep{betker2023improving} and GenTron \citep{chen2024gentron} exhibit particular strengths in attribute binding and non-spatial relationships. In contrast, IterComp achieves comprehensive improvement in compositional generation. It obtains the strengths of various models by collecting composition-aware model preferences, and employs a novel iterative feedback learning to enable self-refinement of both the base diffusion model and reward models in a closed-loop manner.

IterComp achieves a high level of compositionality while simultaneously enhancing the realism and aesthetics of the generated images. As shown in \cref{table:realism}, we evaluate the improvement in image realism by calculating the CLIP Score, Aesthetic Score, and ImageReward. IterComp significantly outperforms previous models across all three scenarios, demonstrating remarkable fidelity and precision in alignment with the complex text prompt. These promising results highlight the versatility of IterComp in both compositionality and fidelity. We provide more quantitative comparison results between IterComp and other diffusion alignment methods in \cref{comparison}.

IterComp requires less time to generate high-quality images. In \cref{table:cost}, we compare the inference time of IterComp with other outstanding models, such as FLUX \citep{flux}, RPG \citep{yang2024mastering} in generating a single image. Using the same text prompts and fixing the denoising steps to 40, IterComp demonstrates faster generation, because it avoids the complex attention computations in RPG and Omost. Our method can incorporate composition-aware knowledge from different models without adding any computational overhead. This efficiency highlights its potential for various applications and offers a new perspective on handling complex generation tasks.

We compare IterComp with state-of-the-art diffusion alignment methods, Diffusion-DPO \citep{wallace2024diffusion} and ImageReward \citep{xu2024imagereward}. As demonstrated in \cref{table:dpo&imagereward}, IterComp significantly outperforms previous diffusion alignment methods across all three scenarios. Iterative feedback learning allows models to achieve self-refinement over multiple iterations, resulting in comprehensive improvements in compositionality and realism.

\begin{table}[tbp]
\vspace{-5mm}
    \centering
    \begin{minipage}{0.6\textwidth}
        \centering
        \caption{Evaluation on image realism.}\vspace{-2mm}
        \setlength{\tabcolsep}{2pt} 
        \resizebox{\linewidth}{!}{ 
            \begin{tabular}{lccc}
                \toprule
                \textbf{Model}  & \textbf{CLIP Score}$\uparrow$ & \textbf{Aesthetic Score}$\uparrow$& \textbf{ImageReward}$\uparrow$\\
                \midrule
                Stable Diffusion 1.4 \citep{rombach2022high}  & 0.307  & 5.326 & -0.065 \\
                Stable Diffusion 2.1 \citep{rombach2022high}  & 0.321  & 5.458  & 0.216\\
                Stable Diffusion XL \citep{betker2023improving}  & \cellcolor{mycolor_green}{0.322} & \cellcolor{mycolor_green}{5.531} &\cellcolor{mycolor_green}{0.780} \\
                \midrule
                GLIGEN \citep{li2023gligen} & 0.301  & 4.892 & -0.077   \\
                LMD+ \citep{lian2023llm} &0.298 &  4.964 &-0.072\\
                InstanceDiffusion \citep{wang2024instancediffusion} & 0.302 & 5.042&-0.035\\
                \midrule
                \textbf{IterComp (Ours)}  &  \cellcolor{pearDark!20}{0.337} &  \cellcolor{pearDark!20}{5.936}  & \cellcolor{pearDark!20}{1.437}\\
                \bottomrule
            \end{tabular}
        }
        \label{table:realism}
    \end{minipage}\hfill
    \begin{minipage}{0.39\textwidth}
        \centering
        \tiny 
        \caption{Evaluation on inference time.}\vspace{-2mm}
        \setlength{\tabcolsep}{2pt} 
        \resizebox{\linewidth}{!}{ 
            \begin{tabular}{lc}
                \toprule
                \textbf{Model}  & \textbf{Inference Time}$\downarrow$ \\
                \midrule
                FLUX-dev   & 23.02 s/Img   \\
                Stable Diffusion XL \citep{betker2023improving}& \cellcolor{pearDark!20}{5.63 s/Img} \\
                Omost \citep{omost} & 21.08 s/Img    \\
                RPG \citep{yang2024mastering} &15.57 s/Img \\
                InstanceDiffusion \citep{wang2024instancediffusion} & 9.88 s/Img\\
                \midrule
                \textbf{IterComp (Ours)}  &  \cellcolor{pearDark!20}{5.63 s/Img} \\
                \bottomrule
            \end{tabular}
        }
        \label{table:cost}
    \end{minipage}
    \vspace{-3mm}
\end{table}

\begin{table}[t]
\vspace{-1mm}
    \centering
        \caption{Comparison between IterComp and other diffusion alignment methods.}\vspace{0em}
        \resizebox{.8\linewidth}{!}{ 
            \begin{tabular}{lccc}
                \toprule
                \textbf{Model}  &\textbf{Average Result on T2I-CB}$\uparrow$ &\textbf{CLIP Score}$\uparrow$ & \textbf{Aesthetic Score}$\uparrow$\\
                \midrule
                Stable Diffusion XL \citep{betker2023improving} & 0.4441 & 0.322 & 5.531  \\
                Diffusion-DPO \citep{wallace2024diffusion}& 0.4417& \cellcolor{mycolor_green}{0.326}  & 5.572   \\
                ImageReward \citep{xu2024imagereward}&\cellcolor{mycolor_green}{0.4639} &0.323 &  \cellcolor{mycolor_green}{5.613} \\
                \midrule
                \textbf{IterComp (Ours)} &\cellcolor{pearDark!20}{0.5554} &  \cellcolor{pearDark!20}{0.337} &  \cellcolor{pearDark!20}{5.936}  \\
                \bottomrule
            \end{tabular}
            }
        \label{table:dpo&imagereward}
        \vspace{-3mm}
\end{table}

\subsection{Ablation Study}

\begin{figure}[htbp]
\vspace{-3mm}
    \centering
    \begin{subfigure}{0.32\textwidth}
        \centering
        \includegraphics[width=\linewidth]{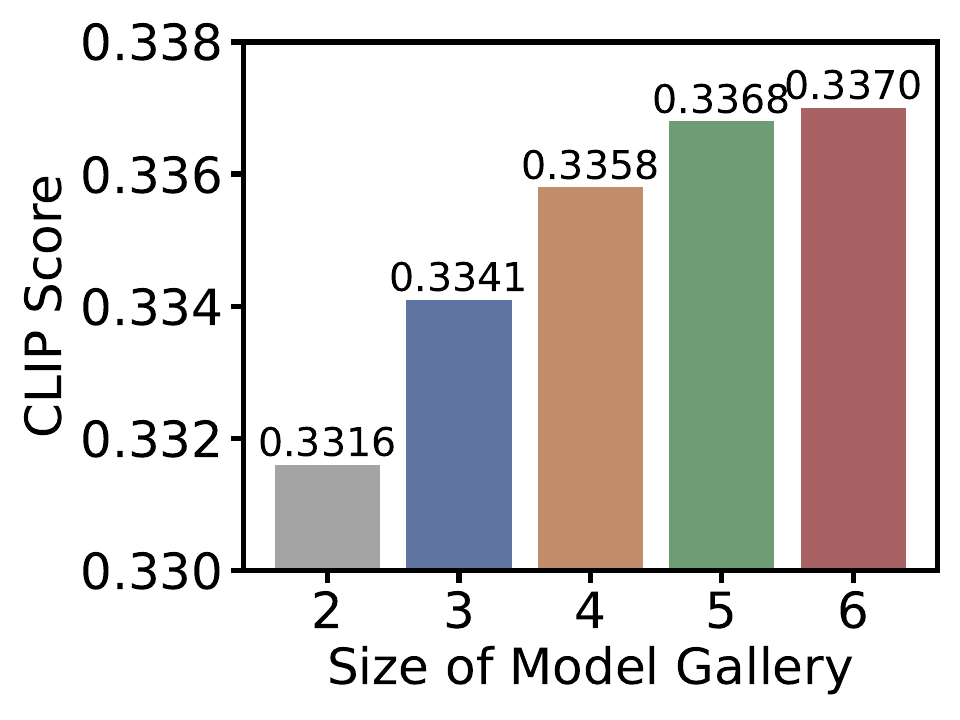}
        \vspace{-6mm}
        \caption{Impact on CLIP Score.}
        \label{fig:a}
    \end{subfigure}
    \begin{subfigure}{0.32\textwidth}
        \centering
        \includegraphics[width=\linewidth]{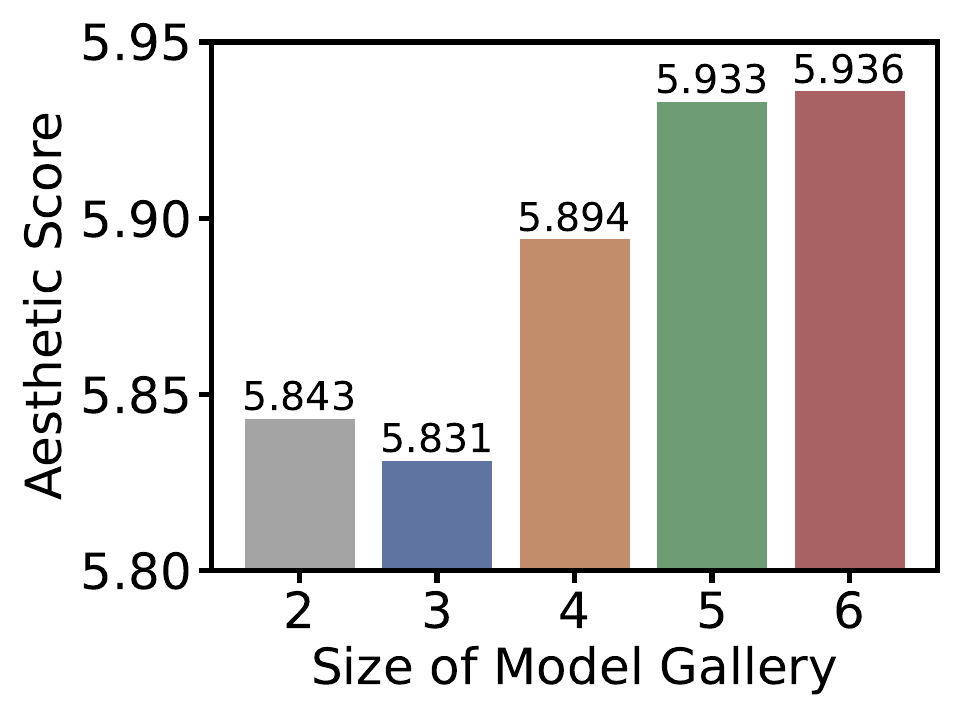}
        \vspace{-6mm}
        \caption{Impact on Aesthetic Score.}
        \label{fig:b}
    \end{subfigure}
    \begin{subfigure}{0.32\textwidth}
        \centering
        \includegraphics[width=\linewidth]{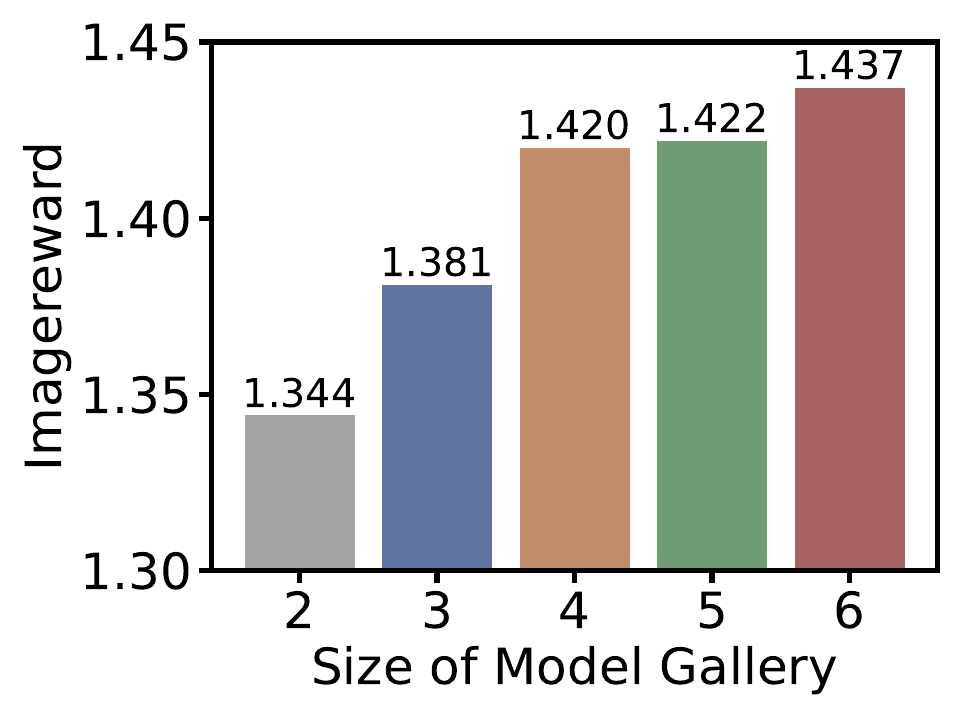}
        \vspace{-6mm}
        \caption{Impact on ImageReward.}
        \label{fig:c}
    \end{subfigure}
    \vspace{-2mm}
    \caption{Ablation study on the model gallery size.}
    \label{fig:gallery}
\vskip -0.2in
\end{figure}
\paragraph{Effect of Model Gallery Size} In the ablation study on model gallery size, as shown in \cref{fig:gallery}, we observe that increasing the size of the model gallery leads to improved performance for IterComp across various evaluation tasks. To leverage this finding and provide more fine-grained reward guidance, we progressively expand the model gallery over multiple iterations by incorporating the optimized base diffusion model and new models such as Omost \citep{omost}.
\paragraph{Effect of composition-aware iterative feedback learning}
We conducted an ablation study (see \cref{fig:iteration}) to evaluate the impact of composition-aware iterative feedback learning. The results show that this approach significantly improves both the accuracy of compositional generation and the aesthetic quality of the generated images. As the number of iterations increases, the model’s preferences gradually converge. Based on this observation, we set the number of iterations to 3 in IterComp.

\begin{figure}[t!]
\vspace{-9mm}
\begin{center}
\centerline{\includegraphics[width=.95\textwidth]{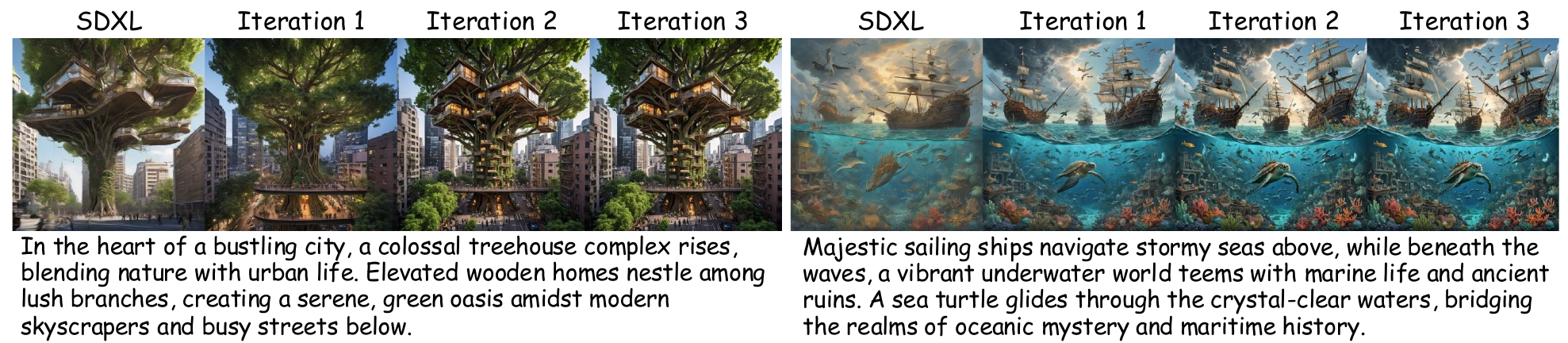}}
\vskip -0.05in
\caption{Ablation study on the iterations of feedback learning.}
\label{fig:iteration}
\end{center}
\vskip -0.3in
\end{figure}
\begin{figure}[t!]
\vskip 0.0in
\begin{center}
\centerline{\includegraphics[width=.95\textwidth]{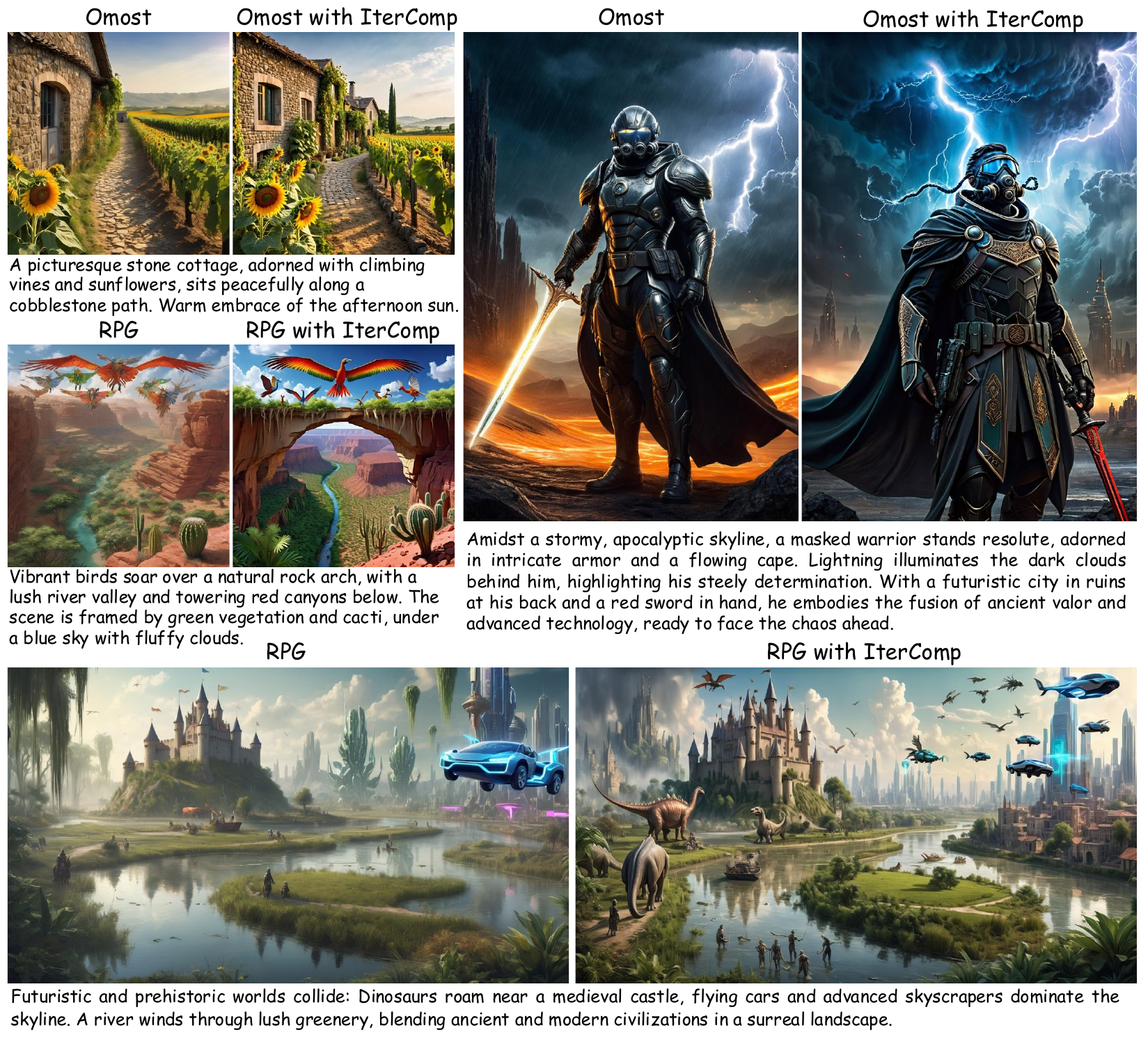}}
\vskip -0.05in
\caption{The generation performance of integrating IterComp into RPG and Omost.}
\label{fig:expand}
\end{center}
\vspace{-8mm}
\end{figure}

\subsection{Generalization Study}

IterComp can serve as a powerful backbone for various compositional generation tasks, leveraging its strengths in spatial awareness, complex prompt comprehension, and faster inference. As shown in \cref{fig:expand}, we integrate IterComp into Omost \citep{omost} and RPG \citep{yang2024mastering}. The results demonstrate that equipped with the more powerful IterComp backbone, both Omost and RPG achieve excellent compositional generation performance, highlighting IterComp's strong generalization ability and potential for broader applications.


\section{Conclusion}
In this paper, we propose a novel framework, IterComp, to address the challenges of complex and compositional text-to-image generation. IterComp aggregates composition-aware model preferences from a model gallery and employs an iterative feedback learning approach to progressively refine both the reward models and the base diffusion models over multiple iterations. For future work, we plan to further enhance this framework by incorporating more complex modalities as input conditions and extending it to more practical applications.

\bibliography{iclr2025_conference}
\bibliographystyle{iclr2025_conference}

\newpage

\appendix
\section{Appendix}

This supplementary material is structured into several sections that provide additional details and analysis related to IterComp. Specifically, it will cover the following topics:
\begin{itemize}
    \item In \cref{sec:preliminary}, we provide a preliminary about Stable Diffusion (SD) and Reward Feedback Learning (ReFL).
    \item In \cref{proof}, we provide detailed theoretical proof of the effectiveness of iterative feedback learning.
    \item \textcolor{black}{In \cref{stability}, we conduct an experimental analysis to assess model stability.
    \item In \cref{user study}, we provide the results of user study.
    \item In \cref{rpg}, we present the quantitative comparison between IterComp and RPG.
    \item In \cref{layout}, we present the quantitative comparison between IterComp and two layout-based models: InstanceDiffusion and MIGC.
    \item In \cref{visualization}, we provide more visualization results for IterComp.}
    
\end{itemize}

\subsection{Preliminary}
\label{sec:preliminary}
\paragraph{Stable Diffusion} Stable Diffusion (SD) \citep{rombach2022high} performs multi-step denoising on random noise $\boldsymbol{z}_T\sim\mathcal{N}(\mathbf{0},\mathbf{I})$ to generate a clear latent $ \boldsymbol{z}_0$ in the latent space under the guidance of text prompt $\boldsymbol{c}$. During the training, an input image $\boldsymbol{x}_0$ is processed by a pretrained autoencoder to obtain its latent representation $\boldsymbol{z}_0$. A random noise $\epsilon \sim \mathcal{N}(\mathbf{0},\mathbf{I})$ is injected into $\boldsymbol{z}_0$ in the forward process as follow:
\begin{equation}
    \boldsymbol{z}_t=\sqrt{\bar\alpha_t}\boldsymbol{z}_0+\sqrt{1-\bar\alpha_t}\epsilon
\end{equation}
where $\alpha_t$ is the noise schedule. The UNet $\epsilon_\theta$ is trained to predict the added noise with the optimization objective:
\begin{equation}
    \min_{\theta} \ \mathcal{L}(\theta) =\mathbb{E} _{[\boldsymbol{z}_0 \sim\mathcal{E}(\boldsymbol{x}_0),\epsilon \sim \mathcal{N} (\mathbf{0},\mathbf{I}),t]}\left[ \left\| \epsilon -\epsilon_\theta (\boldsymbol{z}_t,t,\tau(\boldsymbol{c})) \right\| _{2}^{2} \right]
\end{equation}
where $\mathcal{E}(\cdot)$ denote the preteained encoder of VAE, $\tau(\cdot)$ denotes the pretrained text encoder.
\paragraph{Reward Feedback Learning} Reward Feedback Learning (ReFL) \citep{xu2024imagereward} is proposed to align diffusion models with human preferences. The reward model serves as the preference guidance during the finetuning of the diffusion model. ReFL begins with an input prompt $\boldsymbol{c}$ and a random noise $\boldsymbol{z}_T\sim\mathcal{N}(\mathbf{0},\mathbf{I})$. The noise $\boldsymbol{z}_T$ is progressively denoised until it reaches a randomly selected timestep $t$. The latent $\boldsymbol{z}_0$ is directly predicted from $\boldsymbol{z}_t$, and the decoder from a pretrained VAE is used to generate the predicted image $\boldsymbol{x}_0$. The pretrained reward model $\mathcal{R}(\cdot)$ provides a reward score as feedback, which is used to finetune the diffusion model as follows:
\begin{equation}
    \min_{\theta}\  \mathcal{L}(\theta)=- \mathbb{E}_{\boldsymbol{c} \sim \mathcal{C}}\left(\mathcal{R}\left(\boldsymbol{c}, \boldsymbol{x}_0\right)\right)
\end{equation}
where the prompt $\boldsymbol{c}$ is randomly selected from the prompt dataset $\mathcal{C}$.

\subsection{Theoretical Proof of the Effectiveness of Iterative Feedback Learning}
\label{proof}

\subsubsection{Proof of Lemma \ref{lemma1}}
\begin{proof}[Proof of Lemma \ref{lemma1}] 
Considering the general form of RLHF, we change the optimization problem of iterative feedback learning to a bilevel optimization \citep{wallace2024diffusion, ding2024sail}: 
\begin{equation}\label{eq:bilevel}
\begin{aligned}
 &\min _\mathcal{R} \ \ \ -\mathbb{E}_{\left[\boldsymbol{c} \sim \mathcal{C}, (\boldsymbol{x}_0^w, \boldsymbol{x}_0^l) \sim p_\mathcal{R}^*(\cdot \mid \boldsymbol{c})\right]}\left[\log \sigma\left(\mathcal{R}\left(\boldsymbol{c}, \boldsymbol{x}_0^w\right)-\mathcal{R}\left(\boldsymbol{c}, \boldsymbol{x}_0^l\right)\right)\right] \\
 &\text { s.t. } p_\mathcal{R}^*:=\arg \max _p \mathbb{E}_{\boldsymbol{c} \sim \mathcal{C}}\left[\mathbb{E}_{\boldsymbol{x}_0 \sim p(\cdot \mid \boldsymbol{c})}\mathcal{R}(\boldsymbol{c}, \boldsymbol{x}_0)\right]-\beta\mathbb{D}_\mathrm{KL}[p\left(\boldsymbol{x}_{0: T} \mid \boldsymbol{c}\right)||p_\mathrm{ref}\left(\boldsymbol{x}_{0: T} \mid \boldsymbol{c}\right)]\\
\end{aligned}
\end{equation}
where $p_\mathcal{R}^*$ denotes the optimized base models under the guidance of reward model $\mathcal{R}$. We have the reparameterization of the reward model (also shown in previous works by \citep{wallace2024diffusion}):
\begin{equation}
    \mathcal{R}(\boldsymbol{c}, \boldsymbol{x}_0)= \beta \mathbb{E}_{p_\mathcal{R}\left(\boldsymbol{x}_{1: T} \mid \boldsymbol{x}_0, \boldsymbol{c}\right)}\left[\log \frac{p_\mathcal{R}^*\left(\boldsymbol{x}_{0: T} \mid \boldsymbol{c}\right)}{p_{\mathrm{ref}}\left(\boldsymbol{x}_{0: T} \mid \boldsymbol{c}\right)}\right]+\beta \log Z(\boldsymbol{c})
\end{equation}
\begin{equation}
    Z(\boldsymbol{c})=\sum_{\boldsymbol{x}} p_{\mathrm{ref}}\left(\boldsymbol{x}_{0: T} \mid \boldsymbol{c}\right) \exp \left(\mathcal{R}(\boldsymbol{c}, \boldsymbol{x}_0) / \beta\right)
\end{equation}
Substituting this reward reparameterization into \cref{eq:bilevel}, we get the new optimization objective as:
\begin{equation}
    \min _{p_\mathcal{R}^*} \  -\mathbb{E}_{\left[\boldsymbol{c} \sim \mathcal{C}, (\boldsymbol{x}_0^w, \boldsymbol{x}_0^l) \sim p_\mathcal{R}^*(\cdot \mid \boldsymbol{c})\right]}\left[\log \sigma\left( \beta\log \frac{p_\mathcal{R}^*\left(\boldsymbol{x}_{0: T}^w \mid \boldsymbol{c}\right)}{p_{\mathrm{ref}}\left(\boldsymbol{x}_{0: T}^w \mid \boldsymbol{c}\right)} - \beta\log \frac{p_\mathcal{R}^*\left(\boldsymbol{x}_{0: T}^l \mid \boldsymbol{c}\right)}{p_{\mathrm{ref}}\left(\boldsymbol{x}_{0: T}^l \mid \boldsymbol{c}\right)}\right)\right]
\end{equation}
This new optimization objective is denoted as $J(p_\mathcal{R}^*)$, we get:
\begin{equation}
    \max _{p_\mathcal{R}^*} \   J(p_\mathcal{R}^*)\!=\!\mathbb{E}_{\left[\boldsymbol{c} \sim \mathcal{C}, (\boldsymbol{x}_0^w, \boldsymbol{x}_0^l) \sim p_\mathcal{R}^*(\cdot \mid \boldsymbol{c})\right]}\!\left[\log \sigma\!\left( \!\beta\log \frac{p_\mathcal{R}^*\left(\boldsymbol{x}_{0: T}^w \mid \boldsymbol{c}\right)}{p_{\mathrm{ref}}\left(\boldsymbol{x}_{0: T}^w \mid \boldsymbol{c}\right)}\! -\! \beta\log \frac{p_\mathcal{R}^*\left(\boldsymbol{x}_{0: T}^l \mid \boldsymbol{c}\right)}{p_{\mathrm{ref}}\left(\boldsymbol{x}_{0: T}^l \mid \boldsymbol{c}\right)}\!\right)\right]
\end{equation}
We use $p_\theta$ to parameterize the policy and formulate the final optimization objective as:
\begin{equation}\label{optimize}
    \max _\theta \  J(\theta)\!=\!\mathbb{E}_{\left[\boldsymbol{c} \sim \mathcal{C}, (\boldsymbol{x}_0^w, \boldsymbol{x}_0^l) \sim p_\theta^*(\cdot \mid \boldsymbol{c})\right]}\left[\log \sigma\left( \!\beta\log \frac{p_\theta^*\left(\boldsymbol{x}_{0: T}^w \mid \boldsymbol{c}\right)}{p_{\mathrm{ref}}\left(\boldsymbol{x}_{0: T}^w \mid \boldsymbol{c}\right)} - \beta\log \frac{p_\theta^*\left(\boldsymbol{x}_{0: T}^l \mid \boldsymbol{c}\right)}{p_{\mathrm{ref}}\left(\boldsymbol{x}_{0: T}^l \mid \boldsymbol{c}\right)}\!\right)\right]
\end{equation}
\end{proof}
\subsubsection{Proof of Theorem \ref{theorem1}}
\begin{proof}[Proof of Theorem \ref{theorem1}] 
The gradient of the optimization objective in \cref{optimize} can be written as:
\begin{equation}
    \nabla_\theta J(\theta)\!=\!\nabla_\theta \!\!\!\sum_{\boldsymbol{c}, \boldsymbol{x}_0^w,  \boldsymbol{x}_0^l} \!\!p_\theta\!\left(\boldsymbol{x}_{0: T}^w \!\mid \!\boldsymbol{c}\right) p_\theta(\boldsymbol{x}_{0: T}^l\! \mid \!\boldsymbol{c})\!\left[\log \sigma\!\left( \!\beta\log \frac{p_\theta^*\left(\boldsymbol{x}_{0: T}^w \!\mid \!\boldsymbol{c}\right)}{p_{\mathrm{ref}}\left(\boldsymbol{x}_{0: T}^w \!\mid \!\boldsymbol{c}\right)} - \beta\log \frac{p_\theta^*\left(\boldsymbol{x}_{0: T}^l \!\mid\! \boldsymbol{c}\right)}{p_{\mathrm{ref}}\left(\boldsymbol{x}_{0: T}^l \!\mid \!\boldsymbol{c}\right)}\!\right)\right]
\end{equation}
Assume that:
\begin{equation}
    F_\theta(\boldsymbol{c}, \boldsymbol{x}_0^w,  \boldsymbol{x}_0^l) = \log \sigma\left( \!\beta\log \frac{p_\theta^*\left(\boldsymbol{x}_{0: T}^w \mid \boldsymbol{c}\right)}{p_{\mathrm{ref}}\left(\boldsymbol{x}_{0: T}^w \mid \boldsymbol{c}\right)} - \beta\log \frac{p_\theta^*\left(\boldsymbol{x}_{0: T}^l \mid \boldsymbol{c}\right)}{p_{\mathrm{ref}}\left(\boldsymbol{x}_{0: T}^l \mid \boldsymbol{c}\right)}\!\right)
\end{equation}
\begin{equation}
    \hat{p}_\theta\left(\boldsymbol{x}_{0: T}^w, \boldsymbol{x}_{0: T}^l \mid \boldsymbol{c}\right)=p_\theta\left(\boldsymbol{x}_{0: T}^w \mid \boldsymbol{c}\right) p_\theta(\boldsymbol{x}_{0: T}^l \mid \boldsymbol{c})
\end{equation}
The gradient can be decomposed into two terms:
\begin{equation}
\begin{aligned}
    \nabla_\theta J(\theta)&\!=\!\nabla_\theta \!\!\sum_{\boldsymbol{c}, \boldsymbol{x}_0^w,  \boldsymbol{x}_0^l} \hat{p}_\theta\left(\boldsymbol{x}_{0: T}^w, \boldsymbol{x}_{0: T}^l \mid \boldsymbol{c}\right)F_\theta(\boldsymbol{c}, \boldsymbol{x}_0^w,  \boldsymbol{x}_0^l)\\
    &\!=\!\!\!\!\underbrace{\sum_{\boldsymbol{c}, \boldsymbol{x}_0^w,  \boldsymbol{x}_0^l} \!\!\nabla_\theta\hat{p}_\theta\!\left(\boldsymbol{x}_{0: T}^w, \!\boldsymbol{x}_{0: T}^l \!\mid \!\boldsymbol{c}\right)\!F_\theta(\boldsymbol{c}, \boldsymbol{x}_0^w,  \boldsymbol{x}_0^l)}_{T_1}\!+\!\underbrace{\mathbb{E}_{\left[\boldsymbol{c} \sim \mathcal{C}, (\boldsymbol{x}_0^w, \boldsymbol{x}_0^l) \sim p_\theta^*(\cdot \mid \boldsymbol{c})\right]}[\nabla_\theta[F_\theta(\boldsymbol{c}, \boldsymbol{x}_0^w,  \boldsymbol{x}_0^l)]]}_{T_2}
\end{aligned}
\end{equation}
By expanding the distribution $\hat{p}_\theta$ in $T_1$, a more specific form is obtained:
\begin{equation}
    \begin{aligned}
        T_1 &= \sum_{\boldsymbol{c}, \boldsymbol{x}_0^w,  \boldsymbol{x}_0^l} \nabla_\theta\hat{p}_\theta\left(\boldsymbol{x}_{0: T}^w, \!\boldsymbol{x}_{0: T}^l \mid \boldsymbol{c}\right)F_\theta(\boldsymbol{c}, \boldsymbol{x}_0^w,  \boldsymbol{x}_0^l)\\
        &= \mathbb{E}\left[\left(\nabla_\theta \log p_\theta\left(\boldsymbol{x}_{0: T}^w \mid \boldsymbol{c}\right)+\nabla_\theta \log p_\theta\left(\boldsymbol{x}_{0: T}^l \mid \boldsymbol{c}\right)\right) F_\theta\left(\boldsymbol{c}, \boldsymbol{x}_0^w,  \boldsymbol{x}_0^l\right)\right]\\
    \end{aligned}
\end{equation}
\end{proof}

\subsection{\textcolor{black}{Analysis on Model Stability}}
\label{stability}

\begin{figure}[htbp]
\vspace{-4mm}
    \centering
    \begin{subfigure}{0.49\textwidth}
        \centering
        \includegraphics[width=\linewidth]{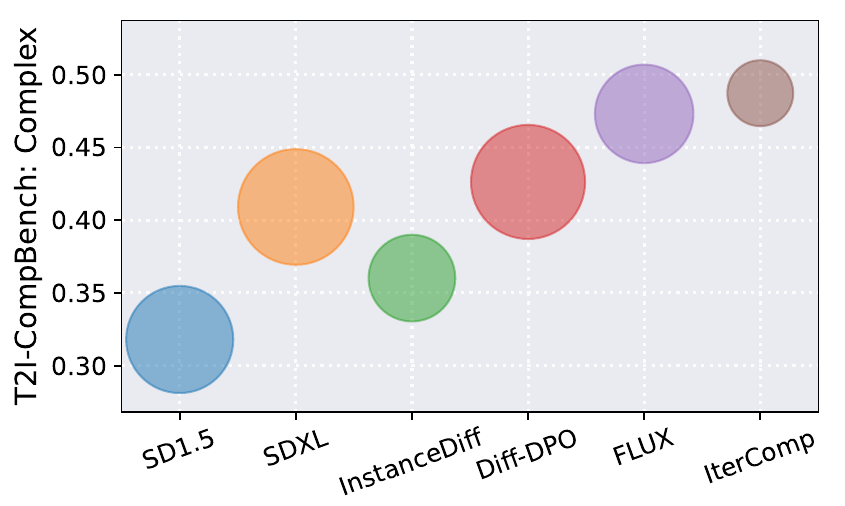}
        \vspace{-6mm}
        \caption{\textcolor{black}{T2I-CompBench: Complex.}}
        \label{fig:complex}
    \end{subfigure}
    \begin{subfigure}{0.49\textwidth}
        \centering
        \includegraphics[width=\linewidth]{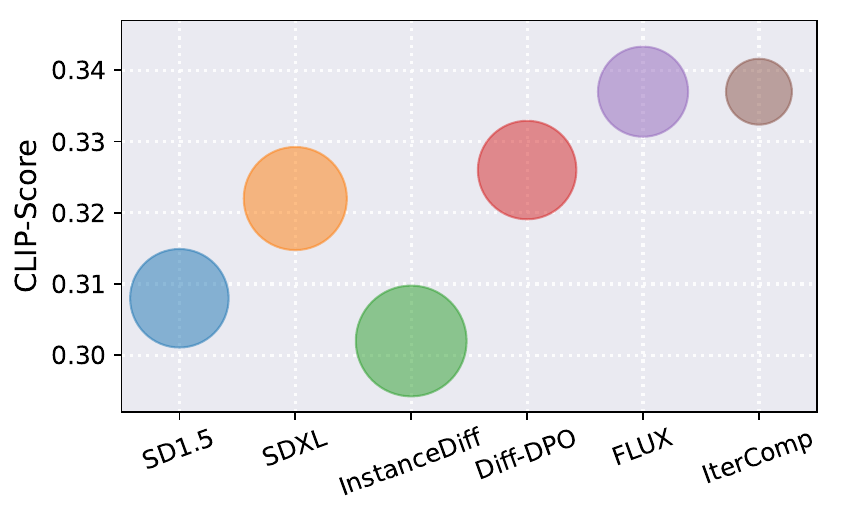}
        \vspace{-6mm}
        \caption{\textcolor{black}{CLIP Score.}}
        \label{fig:clip}
    \end{subfigure}
    \vspace{-2mm}
    \caption{\textcolor{black}{Analysis on model stability.}}
    \label{fig:stable}
\vskip -0.1in
\end{figure}

\textcolor{black}{To evaluate the model stability, we selected five methods for comparison: SD1.5 \citep{rombach2022high}, SDXL \citep{podell2023sdxl}, InstanceDiffusion \citep{wang2024instancediffusion}, Diffusion-DPO \citep{wallace2024diffusion}, and FLUX \citep{flux}, along with two evaluation metrics: Complex and CLIP-score. Using the same 50 seeds, we calculated the mean and variance of the models’ performance for these metrics. To facilitate visualization, we used the variance of each method as the radius and scaled it uniformly by a common factor ($10^4$) for stability analysis. }

\textcolor{black}{Regarding the stability of compositionality, as shown in \cref{fig:complex}, we found that IterComp not only achieved the best overall performance but also demonstrated superior stability. This can be attributed to the iterative feedback learning paradigm enable the model to analyze and refine its output at each optimization step, effectively self-correcting and self-improving. The iterative training approach enables the model to perform feedback training based on its own generated samples rather than solely relying on external data, this enables the model to steadily improve over multiple iterations based on its own foundation. This enables the model to steadily improve over multiple iterations, building on its existing foundation, which significantly enhances its stability.}

\textcolor{black}{For the stability of realism or generation quality, as shown in \cref{fig:clip}, our method also exhibited the highest stability. Therefore, the iterative training approach not only improves the model's performance but also substantially enhances its stability across different dimensions.}

\subsection{\textcolor{black}{User Study}}
\label{user study}

\begin{figure}[ht]
\begin{center}
\centerline{\includegraphics[width=1\textwidth]{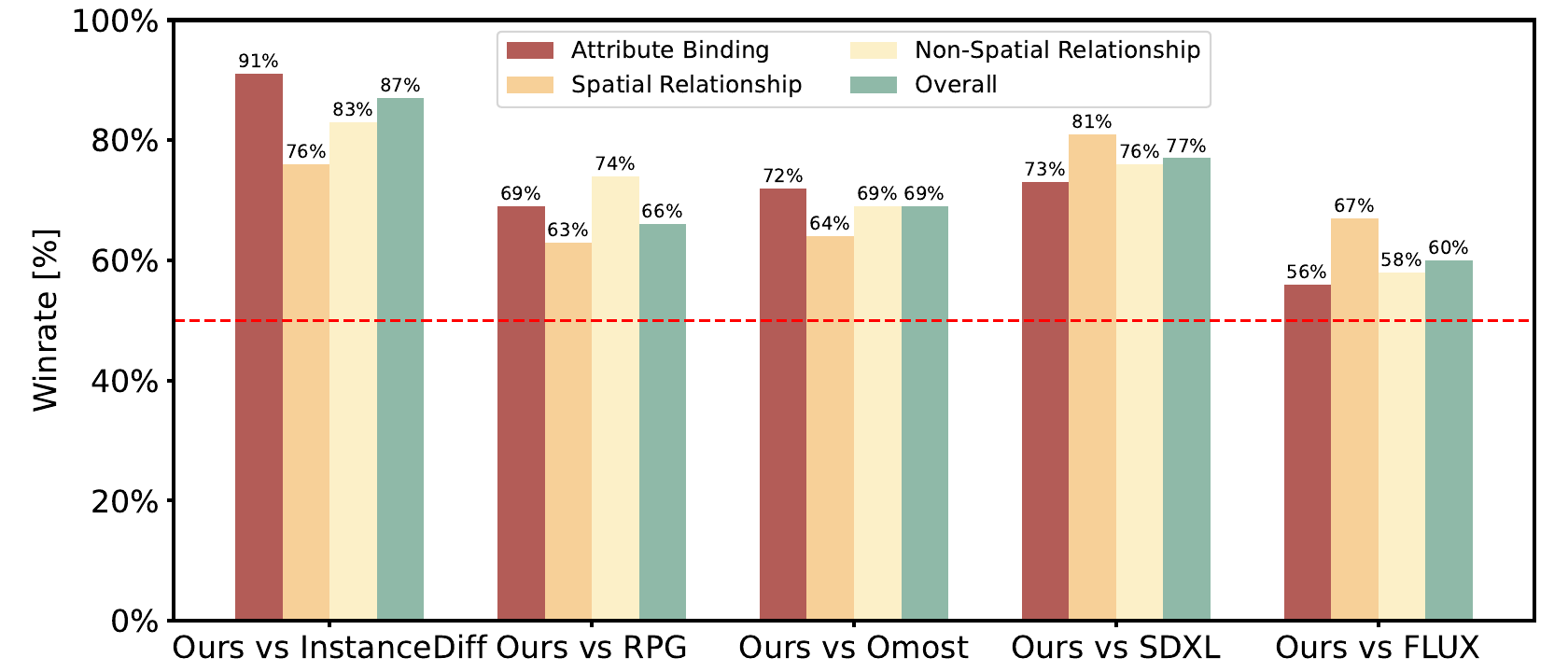}}
\vskip -0.05in
\caption{\textcolor{black}{Results of user study.}}
\label{fig:user2}
\end{center}
\vskip -0.3in
\end{figure}

\textcolor{black}{We conducted a comprehensive user study to evaluate the effectiveness of IterComp in compositional
generation. The study involved 41 randomly selected participants from diverse backgrounds. We compared IterComp with five other methods across four aspects: attribute binding, spatial relationships, non-spatial relationships and overall performance. Each comparison involved 25 prompts, culminating in a final survey of 125 prompts and generating 20,500 votes. From the win rate distribution of IterComp shown in the \cref{fig:user2}, it is evident that IterComp demonstrates significant advantages across all three aspects of compositional generation.}

\textcolor{black}{Specifically, compared to the layout-based model InstanceDiffusion \citep{wang2024instancediffusion}, IterComp shows an absolute advantage in attribute binding. For text-based models SDXL \citep{podell2023sdxl} and FLUX \citep{flux}, IterComp leads significantly in spatial relationships. This highlights that the model gallery design effectively collects composition-aware model preferences and enhances performance across different compositional aspects through iterative feedback learning.}

\subsection{\textcolor{black}{Comparison between IterComp and RPG}}
\label{rpg}

\begin{table}[t]
\centering
\caption{\textcolor{black}{Comparison between IterComp and RPG on DPG-Bench}}\vspace{0.5em}
\resizebox{.8\linewidth}{!}{
    \begin{tabular}{lcccccc}
        \toprule
        \textbf{Model} & \textbf{Global} & \textbf{Entity} & \textbf{Attribute} & \textbf{Relation} & \textbf{Other} & \textbf{Average} \\
        \midrule
        IterComp & 89.91 & 88.64 & 86.73 & 84.77 & 89.74 & 81.17 \\
        RPG & 91.01 & 87.39 & 84.53 & 87.92 & 89.84 & 81.28 \\
        RPG+IterComp & \cellcolor{pearDark!20}{92.74} & \cellcolor{pearDark!20}{91.33} & \cellcolor{pearDark!20}{89.10} & \cellcolor{pearDark!20}{92.38} & \cellcolor{pearDark!20}{90.13} & \cellcolor{pearDark!20}{84.72 }\\
        \bottomrule
    \end{tabular}
}
\label{table:dpg}
\end{table}

\begin{table}[t]
\centering
\caption{\textcolor{black}{Comparison between IterComp and RPG on Genval.}}\vspace{0.5em}
\resizebox{1\linewidth}{!}{
    \begin{tabular}{lccccccc}
        \toprule
        \textbf{Model} & \textbf{Single Obj.} & \textbf{Two Obj.} & \textbf{Counting} & \textbf{Colors} & \textbf{Position} & \textbf{Color Attri.} & \textbf{Overall} \\
        \midrule
        IterComp & 0.97 & 0.85 & 0.63 & 0.86 & 0.33 & 0.41 & 0.675 \\
        RPG & 0.97 & 0.86 & 0.66 & 0.79 & 0.30 & 0.38 & 0.660 \\
        RPG+IterComp & \cellcolor{pearDark!20}{0.99} & \cellcolor{pearDark!20}{0.90} & \cellcolor{pearDark!20}{0.72} & \cellcolor{pearDark!20}{0.90} & \cellcolor{pearDark!20}{0.35} & \cellcolor{pearDark!20}{0.48} & \cellcolor{pearDark!20}{0.723} \\
        \bottomrule
    \end{tabular}
}
\label{table:geneval}
\end{table}

\textcolor{black}{We employed two up-to-date benchmarks: DPG-Bench \citep{hu2024ella} and GenEval \citep{ghosh2024geneval} for testing to evaluate the capabilities of IterComp and RPG \citep{yang2024mastering} in compositional generation. As demonstrated in \cref{table:dpg} and \cref{table:geneval}, IterComp outperforms RPG in metrics like attributes and colors. This is due to our training of a specific reward model for attribute binding, which iteratively enhances IterComp over multiple iterations. Leveraging the strong planning and reasoning capabilities of LLMs, RPG excels in areas such as relations, counting, and positioning. When IterComp is used as the backbone for RPG, the model exhibits remarkable performance across all aspects. This highlights IterComp's superiority in compositional generation. It's important to note that IterComp is a simple SDXL-like model that doesn't require complex computations during inference. As a result, under the same conditions such as prompts and inference steps, IterComp is nearly three times faster than RPG.}

\subsection{\textcolor{black}{Comparison between IterComp and Layout-based Methods}}
\label{layout}
\begin{figure}[ht]
\begin{center}
\centerline{\includegraphics[width=1\textwidth]{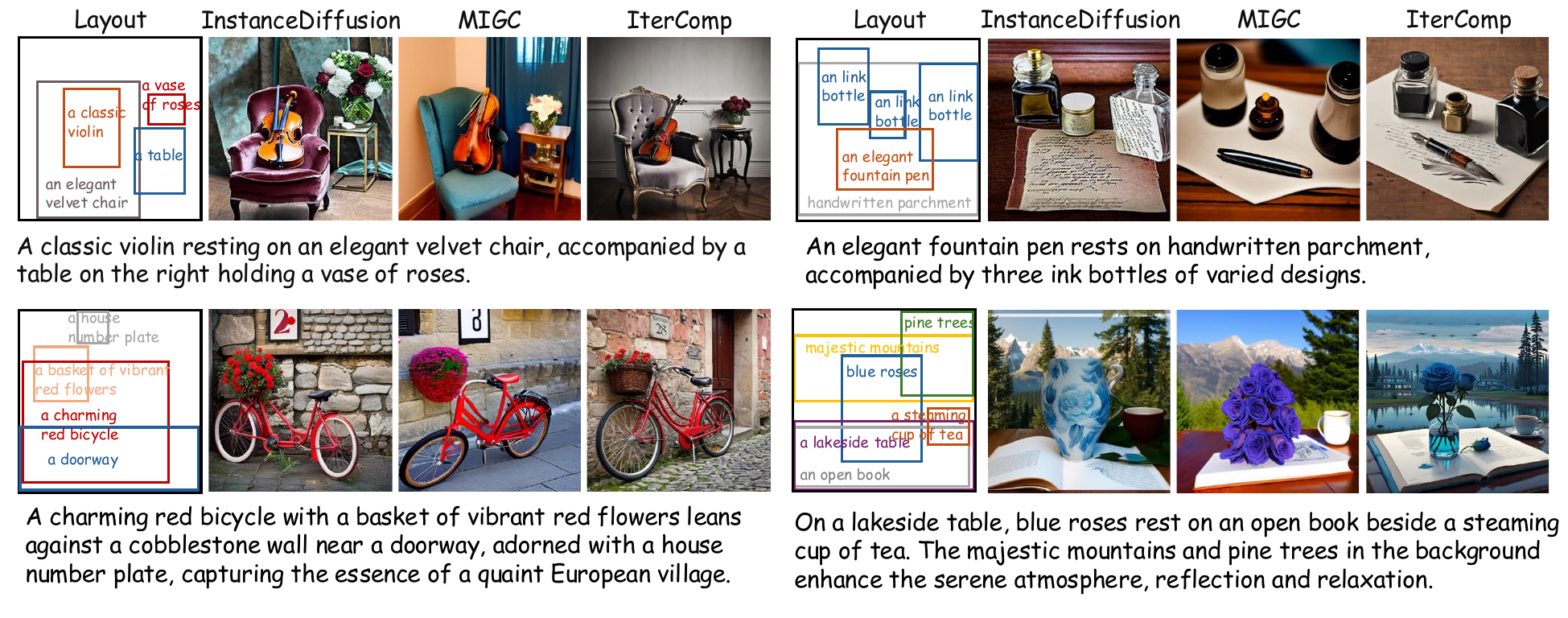}}
\vskip -0.05in
\caption{\textcolor{black}{Qualitative comparison between IterComp and two layout-to-image methods: InstanceDiffusion and MIGC.}}
\label{fig:layout}
\end{center}
\vskip -0.3in
\end{figure}

\textcolor{black}{We provide additional experiments between IterComp, InstanceDiffusion \citep{wang2024instancediffusion}, and MIGC \citep{zhou2024migc}. As shown in \cref{fig:layout}, these examples clearly show that while MIGC and InstanceDiffusion can accurately generate objects in the specified positions of the layout, there is a notable gap in generation quality compared to IterComp, such as aesthetics and details. Moreover, the images generated by these two methods often appear visually unrealistic, with significant flaws such as incomplete violins or mismatches between bicycle and its basket. This highlights the clear superiority of our IterComp on compositional generaton.  }

\subsection{\textcolor{black}{More Visualization Results}}
\label{comparison}

\begin{figure}[ht]
\begin{center}
\centerline{\includegraphics[width=1\textwidth]{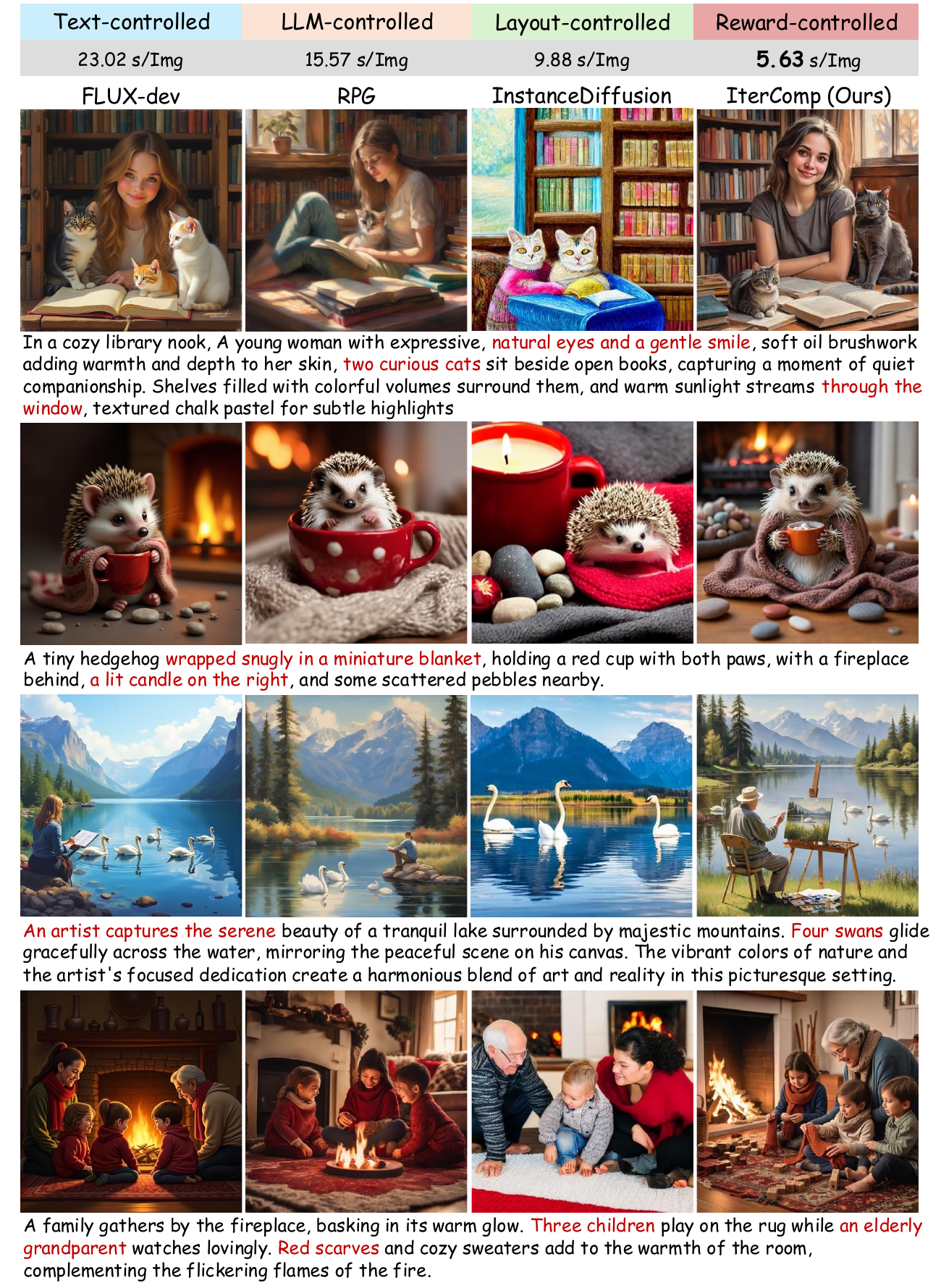}}
\vskip -0.05in
\caption{\textcolor{black}{Qualitative comparison between IterComp and three types of compositional generation methods: text-controlled, LLM-controlled, and layout-controlled approaches. We use GPT-4o to infer the layout from the prompt for InstanceDiffusion. Colored text denotes the advantages of IterComp in generated images.}}
\label{fig:main_app1}
\end{center}
\vskip -0.3in
\end{figure}

\begin{figure}[ht]
\begin{center}
\centerline{\includegraphics[width=1\textwidth]{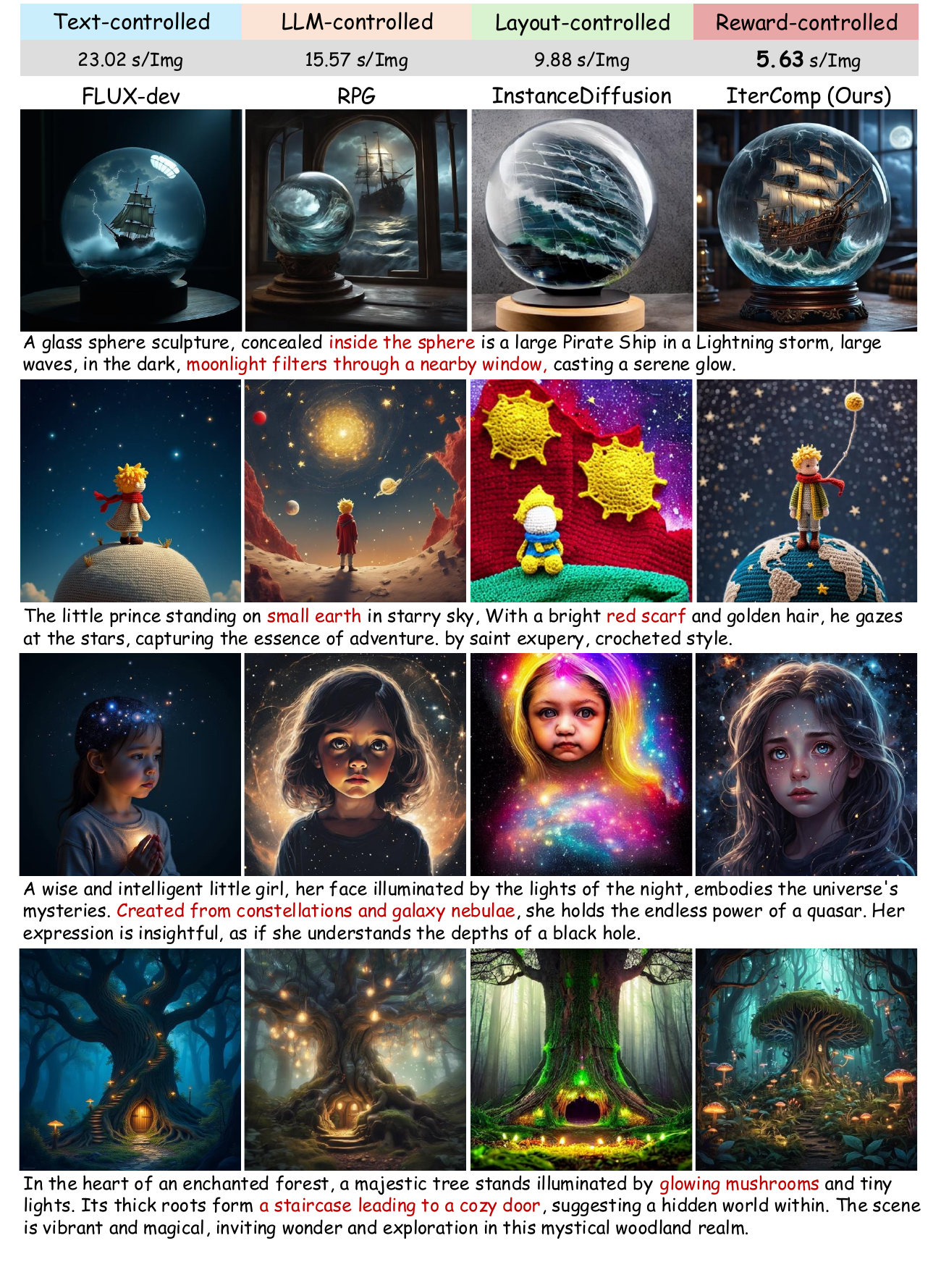}}
\vskip -0.05in
\caption{\textcolor{black}{Qualitative comparison between IterComp and three types of compositional generation methods: text-controlled, LLM-controlled, and layout-controlled approaches. We use GPT-4o to infer the layout from the prompt for InstanceDiffusion. Colored text denotes the advantages of IterComp in generated images.}}
\label{fig:main_app2}
\end{center}
\vskip -0.3in
\end{figure}

\label{visualization}
\begin{figure}[ht]
\begin{center}
\centerline{\includegraphics[width=1\textwidth]{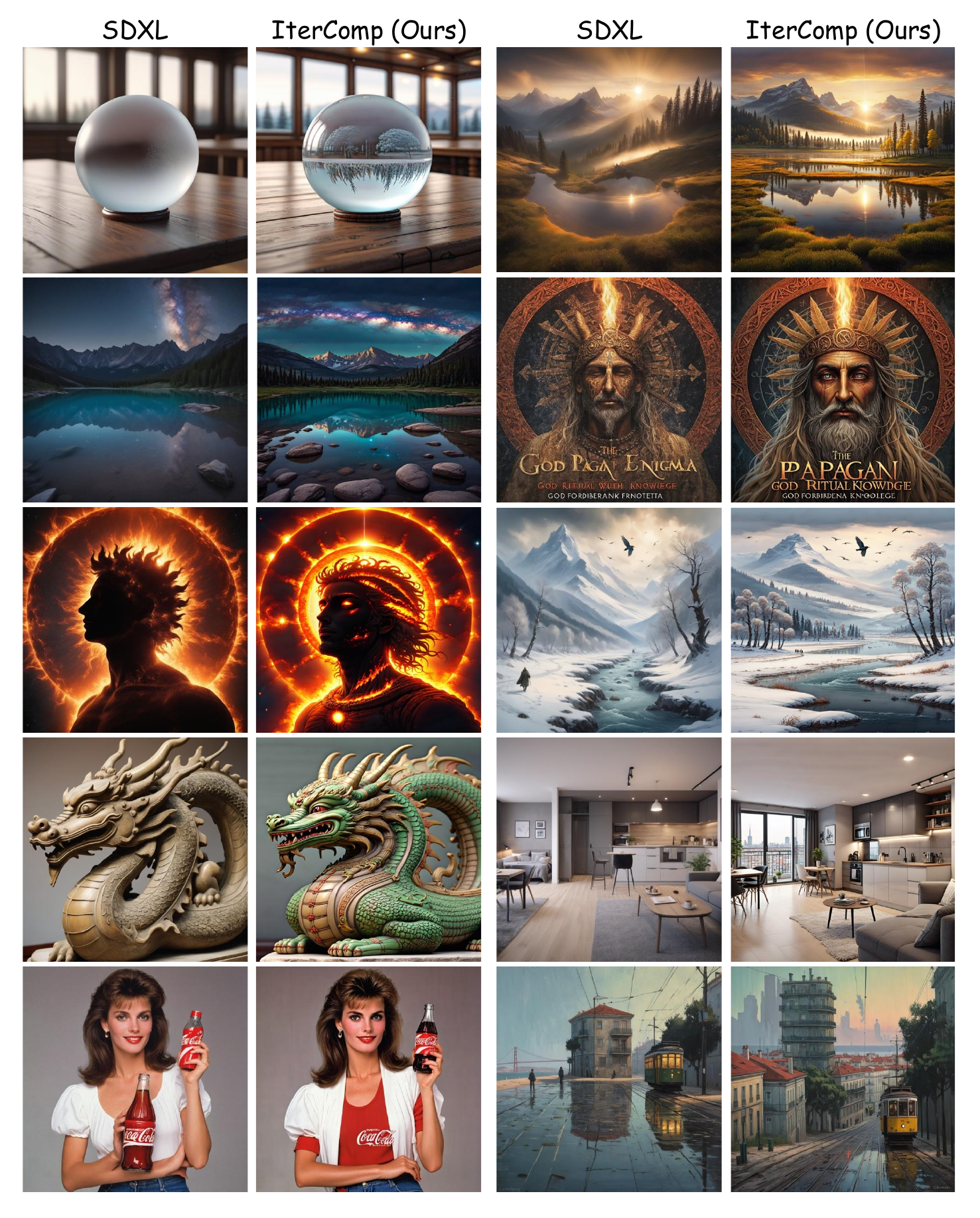}}
\vskip -0.05in
\caption{More visualization results for IterComp and its base diffusion model, SDXL.}
\label{fig:ablation}
\end{center}
\vskip -0.3in
\end{figure}

\end{document}